\newif\ifpreprint
\preprinttrue
  \documentclass[letterpaper, 10pt, conference]{ieeeconf}
  \IEEEoverridecommandlockouts
\usepackage{graphicx}
\usepackage[dvipsnames]{xcolor}
\usepackage{colortbl}

\definecolor{oursmint}{HTML}{E2F1EA}
\definecolor{losssalmon}{HTML}{F9E8E4}
\usepackage{nicematrix}
\usepackage{booktabs}
\usepackage{tabularx}
\usepackage{rotating}
\makeatletter
\def\endtable{\end@float}
\def\endfigure{\end@float}
\makeatother
\usepackage[caption=false]{subfig}
\usepackage{mathrsfs}
\usepackage[mode=buildmissing]{standalone}
\usepackage{amsfonts, amssymb, amsmath}
\usepackage{soul}

\makeatletter
\let\NAT@parse\undefined
\makeatother
\usepackage[colorlinks=true, linkcolor=Blue, citecolor=Blue, urlcolor=Blue]{hyperref}
\usepackage{nicefrac}
\usepackage{multirow}
\usepackage{cleveref}
\usepackage{tcolorbox}

\crefname{algocf}{algorithm}{algorithms}
\Crefname{algocf}{Algorithm}{Algorithms}
\crefname{section}{Sec.}{Secs.}

\usepackage{tikz}
\usetikzlibrary{arrows.meta,calc,positioning}
\usepackage{contour}

\usepackage[ruled,vlined]{algorithm2e}

  \makeatletter
  \def\@begintheorem#1#2{\topsep 0pt\rmfamily\trivlist\itemindent\z@%
      \item[]\leavevmode{\bfseries #1\ #2:}\ \ignorespaces}
  \def\@opargbegintheorem#1#2#3{\topsep 0pt\rmfamily\trivlist\itemindent\z@%
      \item[]\leavevmode{\bfseries #1\ #2}\ (#3):\ \ignorespaces}
  \makeatother
\newtheorem{definition}{Definition}
\newtheorem{theorem}{Theorem}
\newtheorem{problem}{Problem}
\newtheorem{remark}{Remark}
\newtheorem{proposition}{Proposition}
\newtheorem{lemma}{Lemma}
\newtheorem{corollary}{Corollary}

\newcommand{\ith}{i^{\mathrm{th}}}
\newcommand{\Qb}{Q_{\beta}}
\newcommand{\eps}{\varepsilon}
\newcommand{\epsb}{\varepsilon^{(\beta)}}
\newcommand{\CVaR}{\mathrm{CVaR}_{\beta}}
\newcommand{\VaR}{\mathrm{VaR}_{\beta}}

  \title{\LARGE {\scshape Firm}Grasp: A \ul{\bf F}riction-\ul{\bf I}nformed \ul{\bf R}isk \ul{\bf M}argin for Robust \ul{\bf Grasp} Synthesis}

\ifpreprint
\author{Clinton Enwerem\(^{1}\), John S. Baras\(^{1}\), and Calin Belta\(^{1}\)%
\thanks{\(^{1}\)The authors are with the Department of Electrical \& Computer Engineering and Institute for Systems Research, University of Maryland, College Park, MD, USA. Emails:
{\ttfamily\small \char`\{enwerem, baras, calin\char`\}@umd.edu}.}%
}
\else
\author{Anonymous Authors [Submission No.~TBD]}
\fi

\begin{document}
\maketitle
\thispagestyle{empty}
\pagestyle{empty}

\begin{abstract}
	Classical grasp quality metrics assume one deterministic friction coefficient and therefore cannot assess whether a grasp maintains force closure across plausible friction values. We present FIRMGrasp, a family of grasp quality metrics that incorporates friction uncertainty through Conditional Value-at-Risk (CVaR). At confidence level $\beta$, we evaluate the force-closure margin at the mean of the adverse friction tail. This evaluation defines the risk-adjusted margin $\varepsilon^{(\beta)}$, the inscribed-ball radius of the corresponding grasp wrench space. We prove that $\varepsilon^{(\beta)}$ varies monotonically with $\beta$, remains differentiable in the grasp parameters, and certifies that any grasp with $\varepsilon^{(\beta)} > 0$ achieves force closure with probability at least $\beta$. Across 1,599 LEAP Hand and Allegro Hand grasps, $\varepsilon^{(\beta)}$ identifies friction-sensitive grasps that receive high nominal Ferrari-Canny scores, and 53\% of the nominally force-closed grasps lose closure in the adverse friction tail. The nominal margin ranks a successful grasp above a failed grasp with probabilities of only 0.53 in the shake test and 0.67 in the pick test, whereas $\varepsilon^{(\beta)}$ achieves 0.63 and 0.78. At an adverse friction coefficient of 0.2, 70\% of grasps with positive $\varepsilon^{(\beta)}$ withstand a simulated lift and lateral pull, compared with 25\% of grasps with positive nominal margin and nonpositive $\varepsilon^{(\beta)}$. We also synthesize grasps with positive $\varepsilon^{(\beta)}$ for the RealHand L6 and LEAP Hand, both of which retain the object during adverse-friction lifts. In MuJoCo trials with the RealHand L6, 95\% of grasps that establish contact and have positive $\varepsilon^{(\beta)}$ retain the object at the same adverse friction coefficient.
\end{abstract}

\section{Introduction}\label{sec:intro}
\begin{figure}[t]
    \centering
    \begin{tikzpicture}[
        img/.style={inner sep=0, outer sep=0},
        lbl/.style={font=\scriptsize\sffamily, align=center},
        mlbl/.style={font=\scriptsize\sffamily, align=center, fill=white,
                     fill opacity=0.88, text opacity=1, inner sep=1.5pt,
                     rounded corners=1pt},
        hdr/.style={font=\scriptsize\sffamily, text=gray!35!black, align=center},
        rowlbl/.style={font=\scriptsize\sffamily, text=gray!25!black, align=center,
                       rotate=90, anchor=south},
        arr/.style={-{Stealth[length=6pt,width=6pt]}, line width=1.6pt},
    ]
        \node[img, xshift=-2pt] (r1a) {\includegraphics[scale=0.12]{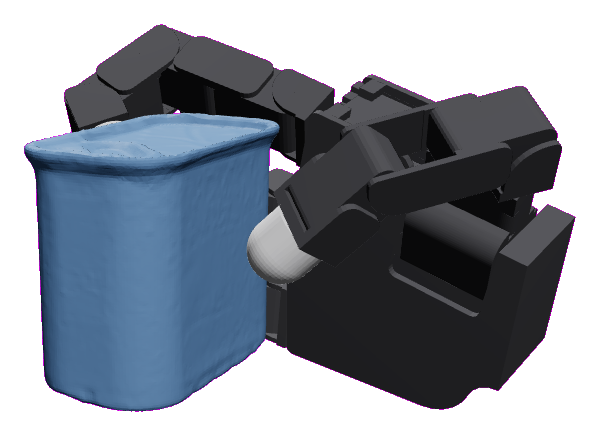}};
        \node[img, right=1.35cm of r1a] (r1b) {\includegraphics[scale=0.12]{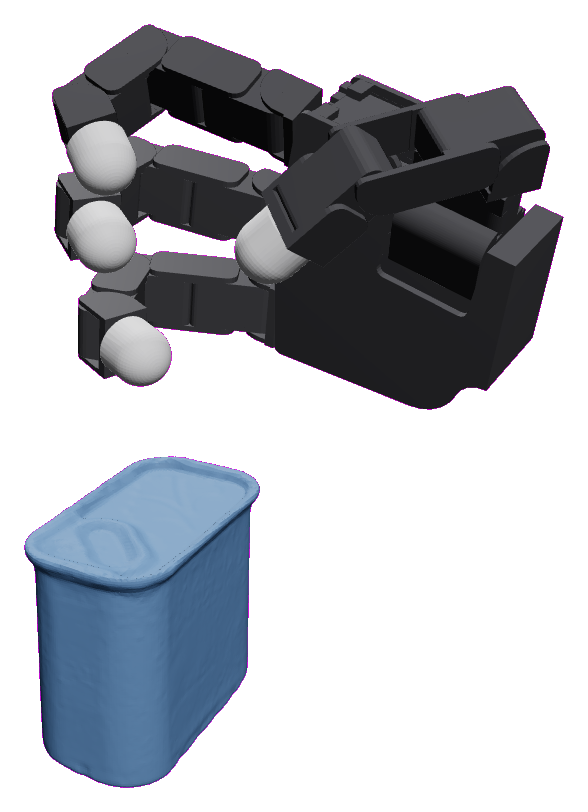}};
        \node[hdr, above=3pt] at (r1a.center |- r1b.north) {grasp at execution};
        \node[hdr, above=3pt] at (r1b.north) {after adverse-friction lift};
        \draw[arr, red!75!black]
            (r1a.east) -- node[above=3pt, mlbl]
                          {$\eps_\text{nom}{=}{+}0.008$\\[1pt]$\epsb{=}{-}0.073$}
                          node[below=3pt, mlbl, text=red!75!black] {drops} (r1b.west);
        \node[rowlbl] at ([xshift=-3pt]r1a.west) {friction-sensitive};
        \node[below=1pt of r1a, lbl] {(a)};
        \node[below=1pt of r1b, xshift=8pt, yshift=24pt, lbl] {(b)};

        \node[img, below=0.8cm of r1a] (r2a) {\includegraphics[scale=0.12]{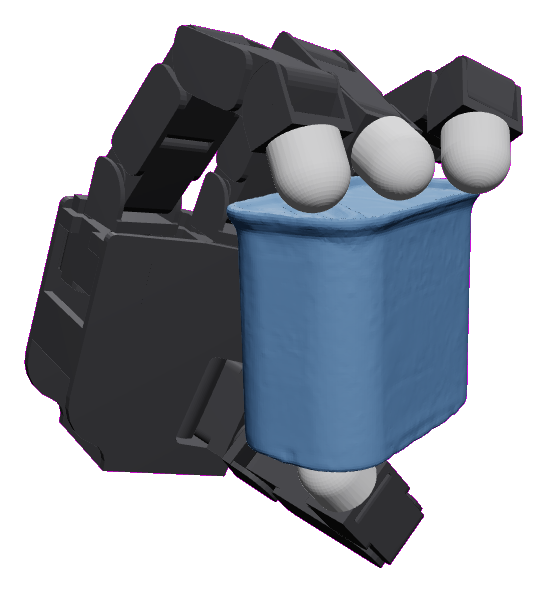}};
        \node[img, right=1.35cm of r2a] (r2b) {\includegraphics[scale=0.105]{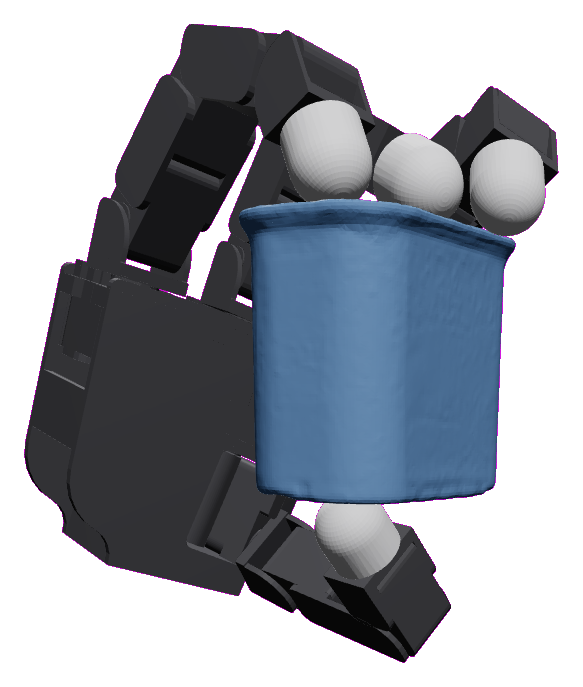}};
        \draw[arr, green!45!black]
            (r2a.east) -- node[above=3pt, mlbl]
                          {$\eps_\text{nom}{=}{+}0.013$\\[1pt]$\epsb{=}{+}0.002$}
                          node[below=3pt, mlbl, text=green!45!black] {holds} (r2b.west);
        \node[rowlbl] at ([xshift=-3pt]r2a.west) {certified-robust};
        \node[below=1pt of r2a, lbl] {(c)};
        \node[below=1pt of r2b, xshift=15pt, lbl] {(d)};
    \end{tikzpicture}
	\vspace{-8pt}
    \caption{\textbf{Predicting Lift Success under Adverse Friction.} The nominal Ferrari-Canny margin assigns nearly identical scores to two Allegro Hand grasps of the same object ($\eps_\text{nom} = {+}0.008$ and ${+}0.013$), but they produce different outcomes under friction uncertainty. At confidence $\beta = 0.9$, $\epsb$ (\Cref{def:rsepsmet}) rejects the first grasp and certifies the second under the adverse prior $0.5\,\mathcal{U}(0.7, 1.0) + 0.5\,\mathcal{U}(0.1, 0.3)$. In a gravity-on lift at $\mu{=}0.3$, the first grasp loses the object, whereas the second completes the lift and retains the object under lateral pull.}
    \label{fig:teaser_pick_contrast}
\end{figure}
Grasping a rigid object with a rigid-link robotic hand entails choosing contacts, a wrist frame, and hand configurations that stabilize the object against arbitrary exogenous wrenches including gravity, and one may further require that the grasp be dexterous, in equilibrium, or stable. Since many grasps meet these desiderata, \emph{quality measures} rank them so that synthesis reduces to an optimization problem. The most widely used such measure is the Ferrari-Canny, or epsilon, metric, the radius of the largest ball inscribed in the grasp wrench space~\cite{ferrariPlanningOptimalGrasps1992}, alongside a range of alternatives that instead evaluate the grasp matrix, finger force limits, or hand-object geometry~\cite{liSastryTaskOriented1988,salisburyArticulatedHands1982,kimOptimalGrasping2001,rimonMechanicsRobotGrasping2019,li_frogger_2023,liAnalyticTheoryIntrinsic2024a}.

However, these metrics score a grasp at the single instant of synthesis, under a friction coefficient, object pose, and contact configuration all assumed fixed and known, so none of them bounds what happens once execution departs from that instant. A grasp that is force-closed and positive-epsilon at synthesis can destabilize within seconds under lateral exogenous forces and an execution-time friction coefficient distinct from the one assumed at synthesis~\cite{enweremVariationalNeuralBeliefParameterizations2026a}. \Cref{fig:teaser_pick_contrast} previews this failure on a toy grasp-and-lift example, two Allegro Hand grasps that the nominal epsilon margin certifies as force-closed at synthesis time. Only the grasp our risk-sensitive certificate also certifies retains the object upon lift under adverse friction, while the other slips. The pattern is not specific to friction. The same epsilon metric is a poor predictor of robustness to pose error, with its force-closure verdict flipping under pose perturbations the metric does not model~\cite{weiszPoseErrorRobust2012}. We therefore treat grasp quality as the probability that a grasp keeps force closure across the distribution of friction coefficients the physical contact could plausibly realize.

Given this probabilistic setting, and motivated by the demonstrated effectiveness of risk-sensitive formulations built on the Conditional Value-at-Risk (CVaR) and related coherent risk measures for safe decision-making under distributional uncertainty in motion planning~\cite{enwerem2024robust}, reinforcement learning~\cite{enwerem2025safety}, and belief-space control under uncertainty~\cite{enweremRiskConstrainedBeliefSpaceOptimization2026a}, we ground grasp quality in the same CVaR construction by formulating it as a risk-sensitive functional over the friction distribution. Thus, our metric enables a direct quantification of robustness, capturing failure modes that arise only after interaction with the environment, a capability standard quality metrics do not provide.

In practice, evaluating force closure through CVaR at confidence level $\beta$ gives a margin, tunable by $\beta$, that quantifies the likelihood a grasp keeps force closure under the adverse tail of the friction prior instead of at a single nominal coefficient. This margin reduces to the classical epsilon margin under a degenerate, point-mass prior, so it generalizes the classical desiderata instead of replacing them. A grasp with a positive margin under the friction tail also has a positive margin at the nominal coefficient, a monotonicity property we prove in \Cref{thm:monotone}. The resulting metric complements recent grasp-synthesis methods~\cite{li_frogger_2023,wang_dexgraspnet_2023,suh_dexterous_2026}, since any of them can supply candidate grasps for our metric to score and rank by friction sensitivity, a ranking our simulated execution studies show predicts lift success and post-execution stability (\Cref{tab:sim_exec,ssec:simexec,fig:grasp_gallery}).
\\[1em]
\noindent\textbf{Contributions:}
\begin{enumerate}
\item \emph{New Risk-Sensitive Quality Metrics.} We define two new grasp
		quality margins, the CVaR margin $\Qb$ (\Cref{eq:Qbeta}) and its closed-form
		analytic counterpart $\epsb$ (\Cref{def:rsepsmet}). Each margin scores a grasp by its
		friction-tail behavior rather than by a single nominal friction
		coefficient, and $\epsb$ reduces to the classical Ferrari-Canny margin
		under a point-mass friction prior, so both generalize the nominal
		metric rather than restating it. We prove their monotonicity,
		differentiability, and probabilistic-closure properties
		(Theorems~\ref{thm:monotone} to~\ref{thm:certificate},
		Proposition~\ref{prop:ordering}).

\item \emph{Probabilistic Closure Certificate.} We provide a formal guarantee that
		a positive risk-sensitive margin implies force closure with probability at
		least $\beta$, for $\Qb$ through the CVaR tail bound
		(Theorem~\ref{thm:certificate}) and for $\epsb$ through friction
		monotonicity (Corollary~\ref{cor:epsb_certificate}).

\item \emph{Risk-Aware Grasp Synthesis and Evaluation.} We implement a sample-based
		grasp synthesis pipeline that uses the CVaR metric to select robust grasps from a
		candidate pool, and we demonstrate on seven objects that the CVaR metric surfaces a
		friction sensitivity that the classical epsilon metric and the min-weight metric of
		FRoGGeR~\cite{li_frogger_2023} do not model, since neither treats friction as
		uncertain.

\end{enumerate}

\section{Related Work}\label{sec:relwork}
\subsection{Classical Analytical Grasp Metrics}
Analytical grasp quality assessment dates to Ferrari and Canny's epsilon metric~\cite{ferrariPlanningOptimalGrasps1992}, alongside the Grasp Wrench Space volume~\cite{liSastryTaskOriented1988}, the grasp isotropy index~\cite{salisburyArticulatedHands1982,kimOptimalGrasping2001}, and the largest minimum resisted wrench~\cite{rimonMechanicsRobotGrasping2019}, each quantifying a different algebraic property of the same grasp map. Uncertainty-aware treatments of these metrics have, until recently, stayed largely empirical, scoring robustness with Monte Carlo statistics over the nominal metric~\cite{weiszPoseErrorRobust2012}. Such statistics predict grasp success reasonably well, but the sampling is expensive, and both the friction coefficient and the pose distribution stay fixed at their nominal values throughout.

By contrast, a more recent line of work departs from the epsilon metric analytically. The min-weight metric of FRoGGeR~\cite{li_frogger_2023} reports the minimum among a set of scaled friction generators at the linearized friction cones. Min-weight is almost-everywhere differentiable, and its linear-program formulation, differentiability included, drives grasp synthesis in under a second, fast enough for dynamic manipulation~\cite{li_drop_2025,suh_dexterous_2026}. It shares with the epsilon metric it approximates, however, a fixed Coulomb friction coefficient, so a grasp it scores highly is precision-optimal only under that nominal coefficient. Our metric extends the same line of work to a friction coefficient treated as uncertain rather than fixed.

\subsection{Uncertainty Representations for Dexterous Grasping}
Other recent work addresses uncertainty at a different stage of grasp planning or execution. PONG bounds the probability of force closure analytically under uncertain contact normals~\cite{liPONG2023}, Gaussian Process Implicit Surfaces represent the object surface itself as a distribution rather than a fixed mesh~\cite{mahlerGPGPISOPT2015,kumarConstrainingGaussianProcess2024}, and a learned variational belief over contact parameters certifies an already-synthesized grasp under multimodal uncertainty~\cite{enweremVariationalNeuralBeliefParameterizations2026a}. These methods certify an already synthesized grasp against execution-time or post-execution perturbations under an assumed uncertainty model. We instead incorporate uncertainty \emph{before} and \emph{during} synthesis. We retain Coulomb's friction condition but model the friction coefficient as a random variable. Consequently, each contact force lies within a \emph{set} of possible friction cones determined by the coefficient distribution. Our metric complements existing force-closure margins, in particular FRoGGeR's min-weight \(\ell^*\) and the robustness measures of Li \emph{et al.}~\cite{liAnalyticTheoryIntrinsic2024a}, and each member of the resulting metric family answers a different robustness question about a grasp.

\subsection{Robust Grasp Synthesis}
A separate line of work targets wrench capability directly. \cite{liAnalyticTheoryIntrinsic2024a} prove that a force-closed grasp synthesized under a nominal basis-wrench hull withstands any disturbance smaller than the residual between that hull and the hull the perturbed basis wrenches sweep out. Grasp stability tracks wrench resistance closely enough that this residual, which we term the \emph{wrench capability residual}, is a useful quantity in its own right. The residual measures the distance a friction perturbation can move the primitive wrenches before closure fails.

In addition to analytic synthesis, a substantial body of learning-based work targets dexterous grasp generation directly. Differentiable formulations of force closure treat contact selection as a smooth optimization objective, propagating gradients through a grasp planner~\cite{liu2020deep} or through a differentiable simulator that scores candidate grasps by how well they withstand contact-rich perturbation~\cite{turpin2022grasp,turpin2023fast}, a differentiable force-closure estimator built on the classical epsilon metric extends the same treatment to arbitrary hand structures~\cite{liu2021synthesizing}, and a bilevel formulation pairs a conditional generative model with a differentiable force-closure constraint~\cite{wu2023learning}. A separate line of work learns generative models that sample grasp poses conditioned on object geometry or point clouds, spanning variational~\cite{mousavian20196}, adversarial~\cite{corona2020ganhand,lundell2021multi,lundell2021ddgc}, real-time regression~\cite{mayer2022ffhnet}, and diffusion-based~\cite{lu2024ugg,weng2024dexdiffuser,urain2023se} formulations, alongside SE(3)-equivariant dexterous grasp flows~\cite{enwerem_equidexflow_2026}. Earlier discriminative approaches instead score or synthesize grasps directly from depth images or contact maps~\cite{varley2015generating,schmidt2018grasping,brahmbhatt2019contactgrasp}, while hand-agnostic frameworks learn grasping policies that transfer across gripper morphologies~\cite{shao2020unigrasp,li2022efficientgrasp,xu2021adagrasp}. These generators are efficient at producing candidate grasps but do not certify or rank robustness under friction uncertainty, so our metric is agnostic to the generator, scoring grasps from any such source.

\section{Mathematical Preliminaries}\label{sec:bkgd}
\begin{figure}[t]
	\centering
	\begin{tikzpicture}[
			framearrow/.style={->,>=Stealth,line width=1pt},
			xlen/.store in=\xlen, xlen=0.09,
			ylen/.store in=\ylen, ylen=0.13,
			zlen/.store in=\zlen, zlen=0.09
		]
		\node[anchor=south west, inner sep=0] (img) at (0,0) {
			\includegraphics[width=0.65\linewidth,trim=220pt 240pt 380pt 255pt,clip]{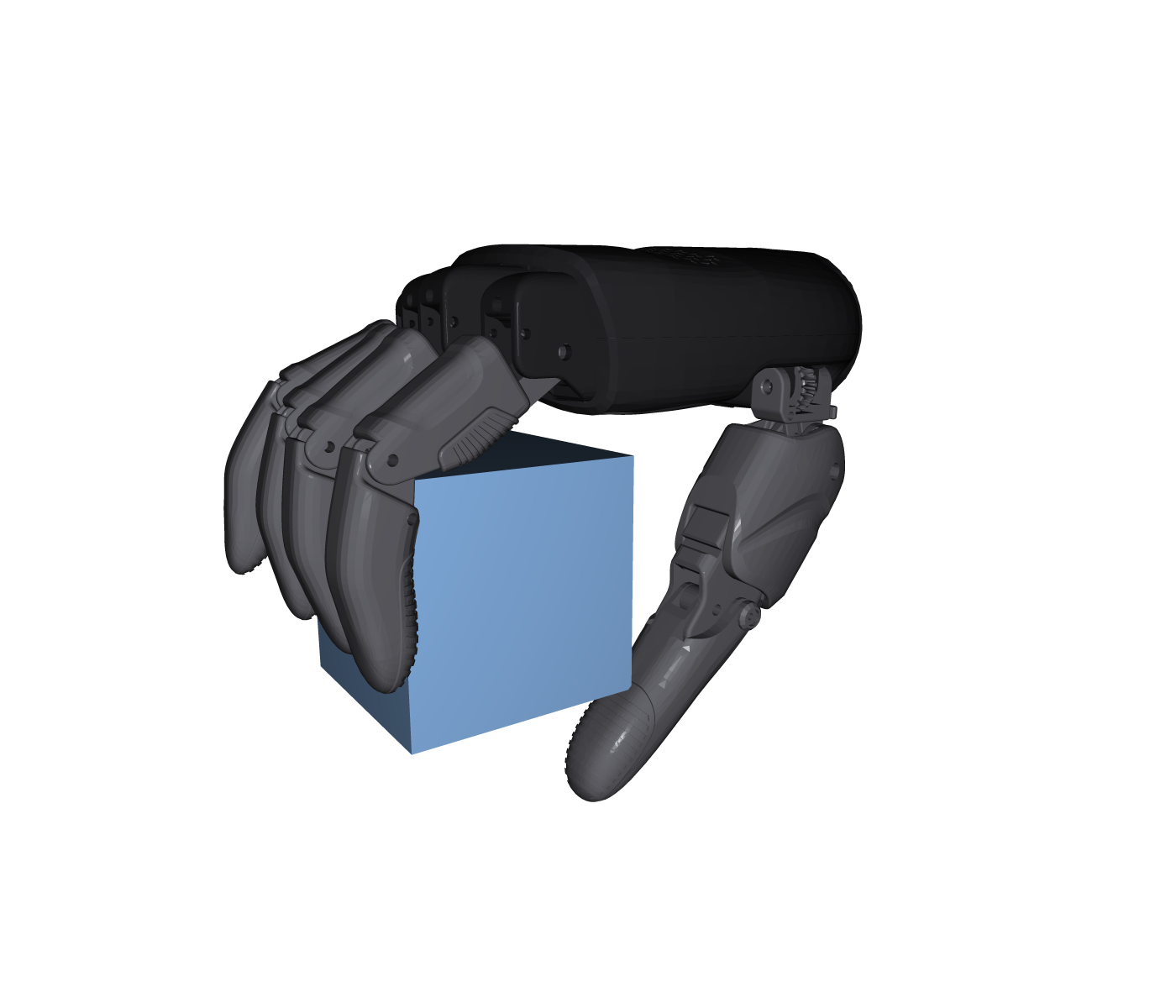}
		};
		\begin{scope}[x={(img.south east)},y={(img.north west)}]
			\coordinate (wrist) at (1.005,0.83);
			\draw[framearrow, red!80!black] (wrist) - ++(0,-\xlen-0.08)
			node[below, xshift=4pt] {\small \(\boldsymbol{x_H}\)};
			\draw[framearrow, green!45!black] (wrist) - ++(\ylen, -0.04)
			node[below right] {\small \(\boldsymbol{y_H}\)};
			\draw[framearrow, blue!75] (wrist) - ++(-\zlen-0.04,0.00)
			node[above left] {\small \(\boldsymbol{z_H}\)};
			\node[above right] at (wrist) {\small \(\{\mathcal{F}_H\}\)};
			\begin{scope}[transform canvas={xshift=-0.4cm, yshift=-0.5cm}]
				\coordinate (obj) at (0.58,0.45);
				\draw[framearrow, red!80!black] (obj) - ++(\zlen,-0.04)
				node[right, xshift=-4pt, yshift=-2pt] {\small \(\boldsymbol{x_O}\)};
				\draw[framearrow, green!45!black] (obj) - ++(0,\ylen)
				node[above] {\small \(\boldsymbol{y_O}\)};
				\draw[framearrow, blue!75] (obj) - ++(-\zlen+0.012,-\ylen+0.09)
				node[below left, xshift=4pt, yshift=5pt] {\small \(\boldsymbol{z_O}\)};
				\node[above right] at (obj) {\small \(\{\mathcal{F}_{O}\}\)};
				\coordinate (joint_base) at (0.325,0.62);
				\coordinate (p1) at (0.41,0.320);
				\draw[framearrow, black] (p1) - ++(0.1,0.0325)
				node[above right, xshift=-3pt, yshift=-8pt] {\small \(n^{(i)}\)};
				\draw[framearrow, black] (p1) - ++(0.1,-0.045);
				\draw[framearrow, black] (p1) - ++(0.00,-0.15)
				node[right, xshift=-2pt, yshift=-2pt] {\small \(\{\mathcal{F}^{C_i}\}\)};
				\node[below left, yshift=-6pt, xshift=-6pt] at (p1) {\small \({p^{(i)}}\)};
				\draw[fill=white, draw=black, line width=0.4pt] (p1) circle (2pt);
			\end{scope}
		\end{scope}
	\end{tikzpicture}
	\caption{\textbf{Grasping Frames.} Contact, object, and hand coordinate frames
		used in our uncertainty-aware grasp planning framework. These include the hand base
		frame, \(\{\mathcal{F}_H\}\), the object frame, \(\{\mathcal{F}_O\}\), and the
		\(\ith\) contact frame, \(\{\mathcal{F}^{C_i}\}\) attached to contact point \(i\)
		at \(p^{(i)} \in \mathbb{R}^3\). Admissible contact forces at a contact point are
		in \(\{\mathcal{F}^{C_i}\}\) and positions
		in \(\{\mathcal{F}_O\}\), and we assume knowledge of the contact-to-object frame
		force rotation matrix (\(R_{OC}\)) and the object-to-hand homogeneous
		transformation (\(T_{OH}\)).}\label{fig:grspframes}
\end{figure}

We consider a multi-fingered hand with configuration \(q_h \in \mathbb{R}^{n_h}\)
grasping a rigid object whose pose in the hand frame \(\{\mathcal{F}_{H}\}\) is
\(X^o \in SE(3)\). The hand forward kinematics \(\mathrm{FK}(\cdot)\) generates a
contact set \(\mathcal{C}:=\{(p^{(i)}, \{\mathcal{F}^{C_i}\}) \mid p^{(i)}:=
\mathrm{FK}(q_h, X^o) \in \mathbb{R}^3,\; i = 1, \ldots, n_c,\;
n_c \le N_{\max}\}\), where \(n_c\) is the contact count and \(N_{\max}\) the
fingertip count of the hand. Each contact comprises a point \(p^{(i)}\) and a frame
\(\{\mathcal{F}^{C_i}\} := \{t^{(i)}_x, t^{(i)}_y, n^{(i)}\}\) of
orthonormal tangent directions and the unit surface normal
\(n^{(i)} \in \mathbb{R}^3\).

Concretely, our risk-sensitive margins evaluate the adverse tail of a friction-dependent quantity
through the lower-tail Conditional
Value-at-Risk (CVaR)~\cite{rockafellarOptimizationConditionalValueatrisk2000}. For a scalar
random variable \(Z\) with quantile function \(F_Z^{-1}\) and
confidence \(\beta \in (0,1)\),
\begin{equation}\label{eq:cvardef}
	\CVaR(Z) := \frac{1}{1-\beta}\int_0^{1-\beta} F_Z^{-1}(u)\,du,
\end{equation}
the mean of the worst \((1-\beta)\)-fraction of outcomes, so \(\beta \to 1\) is the
risk-averse limit and \(\VaR(Z) := F_Z^{-1}(1-\beta)\) is the matching lower quantile,
with \(\CVaR(Z) \le \VaR(Z)\).

\subsection{The Geometry of Parametric Grasps}\label{ssec:graspgeom}
We assume hard-finger contacts with friction, with contact forces satisfying Coulomb's
condition~\cite{murrayMathematicalIntroductionRobotic2017a} and lying in a linearized
polyhedral friction cone of a predetermined number of edge rays or
\emph{generators}, where each generator is a primitive contact force in
\(\mathbb{R}^3\).

\begin{definition}[Grasp Parameterization]\label{def:graspparam}
	We denote a grasp by the tuple
	\(g:=(\mathcal{C}, q_h, X^o)\) comprising the contact
	set, hand configuration, and object pose. Let \(\mathcal{G}\)
	denote the set of all such grasps.
\end{definition}

Given this parameterization, we measure the quality of a grasp \(g\)
(\Cref{def:graspparam}) by a signed scalar whose sign reports force closure. Both the classical Ferrari-Canny
margin and the risk-adjusted margin we develop in \Cref{sec:rsgqual} instantiate
the underlying quality map, and every closure certificate below turns on its sign.

\begin{definition}[Grasp Quality Function]\label{def:graspqual}
	A grasp quality function is a scalar map \(Q: \mathcal{G} \rightarrow \mathbb{R}\),
	positive when the grasp attains the encoded property
	(here, force closure) and negative otherwise.
\end{definition}

\begin{definition}[Primitive Contact Forces \& Induced Wrench Cone]\label{def:ctcforce}
	The polyhedral cone approximating admissible contact forces at the \(\ith\) contact
	point is
	\begin{equation}\label{eq:polyfriccone}
		\mathfrak{C}^{(i)} := \Bigl\{F = {\sum^{n_s}_{j=1}\alpha^{(i)}_j f^{(i)}_j}
		\;\Big|\; \alpha^{(i)}_j \ge 0\Bigr\},
	\end{equation}
	where \(j = 1, \ldots, n_s\) indexes the generator set and \(f^{(i)}_{j}\) is
	the primitive force corresponding to contact \(i\) and generator \(j\), expressed in
	the object frame:
	\begin{equation}\label{eq:feq}
		f^{(i)}_{j}=f^{(i)}_{n, j}n^{(i)} + f^{(i)}_{t, j}, \quad
		f^{(i)}_{t, j} \in \operatorname{span}\bigl\{t^{(i)}_{x}, t^{(i)}_{y}\bigr\}.
	\end{equation}
\end{definition}

\noindent The generators satisfy the friction limit specified by Coulomb's condition,
i.e., \(\|f_t\| = \mu f_n\). Thus, from~\eqref{eq:feq}, each generator is an affine
function of the friction coefficient \(\mu\):
\begin{equation}\label{eq:coufcone}
	f^{(i)}_{j} := \phi_{ij}(\mu)
	= f^{(i)}_{n,j} n^{(i)} + \mu f^{(i)}_{n,j}\hat{t}_{ij}.
\end{equation}
The primitive wrench about the object's center-of-mass $o$ is then
\begin{equation}\label{eq:primwrench}
	w_{ij}(\mu)=\left[\begin{array}{c}
			\phi_{ij}(\mu) \\
			(p^{(i)}-o) \times \phi_{ij}(\mu)
		\end{array}\right].
\end{equation}
We assemble the GWS as the convex hull of the primitive wrenches of \Cref{def:ctcforce},
\(\mathcal{W}(\mu) := \operatorname{conv}(\{w_{ij}(\mu)\}_{i,j})\), the
total-force-normalized construction
of~\cite{ferrariPlanningOptimalGrasps1992,liSastryTaskOriented1988}
with a unit normal force per generator (\(f^{(i)}_{n,j} = 1\)) and unscaled
moment coordinates. Because the tangential components of the generators come in
opposing pairs, \(\mathcal{W}(\mu') \subseteq \mathcal{W}(\mu)\) for
\(0 \le \mu' \le \mu\). Every generator at \(\mu'\) is a convex combination of the
corresponding pair at \(\mu\), so the GWS nests in the friction coefficient.
Splitting each generator at \(\mu = 0\) also fixes two auxiliary bodies used
repeatedly below, the \emph{normal wrench body} \(\mathcal{W}_n := \mathcal{W}(0)\)
and the \emph{tangential wrench body}
\(\mathcal{W}_t := \operatorname{conv}(\{(f^{(i)}_{n,j}\hat{t}_{ij},\;
(p^{(i)}-o)\times f^{(i)}_{n,j}\hat{t}_{ij})\}_{i,j})\).

\section{Risk-Adjusted Grasp Wrench Space}\label{sec:radjgws}
The friction coefficient assumed at planning time may deviate from its realized value at execution, so the grasp wrench space is a random polytope rather than a deterministic set. We scale the underlying polytope with friction, discount its adverse tail, and evaluate a force-closure margin on the discounted set. Under friction uncertainty, the polytope scales linearly with the coefficient (Lemma~\ref{lem:linearscaling})\footnote{Under object-pose uncertainty, the GWS polytope rotates instead, and the active uncertainty modality determines which transformation applies.} Let the friction
coefficient be a random scalar
\(\mu \sim p_\mu\). The resulting polyhedral friction cone~\eqref{eq:polyfriccone}
thus becomes a convex set of uncertain generators whose magnitudes depend on the
realization of \(\mu\).

\begin{lemma}[GWS Linear Scaling Under Friction Uncertainty]\label{lem:linearscaling}
	The primitive wrench \(w_{ij}(\mu)\) is affine in \(\mu\). As a result, the GWS
	\(\mathcal{W}(\mu)\) satisfies
	\(\mathcal{W}(\mathbb{E}[\mu]) \subseteq \mathbb{E}[\mathcal{W}(\mu)]\) in the
	Minkowski-expectation sense, and its diameter scales linearly in \(\mu\).
\end{lemma}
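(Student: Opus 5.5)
The plan has three parts, all resting on the explicit generator formula~\eqref{eq:coufcone}, and the second part is the only substantive one. First, affinity. Since \(\phi_{ij}(\mu) = f^{(i)}_{n,j} n^{(i)} + \mu f^{(i)}_{n,j}\hat t_{ij}\) is affine in \(\mu\), and the moment coordinate \((p^{(i)}-o)\times\phi_{ij}(\mu)\) is the image of \(\phi_{ij}(\mu)\) under a fixed linear map, the primitive wrench~\eqref{eq:primwrench} takes the form
\[
w_{ij}(\mu) = w^n_{ij} + \mu\, w^t_{ij}.
\]
Here \(w^n_{ij}\) and \(w^t_{ij}\) are the generators of the normal wrench body \(\mathcal{W}_n\) and the tangential wrench body \(\mathcal{W}_t\), respectively.

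Second, the Minkowski-expectation inclusion. I would define \(\mathbb{E}[\mathcal{W}(\mu)]\) as the Aumann expectation, i.e.\ the set of \(\mathbb{E}[\xi]\) over integrable selections \(\xi(\mu)\in\mathcal{W}(\mu)\). I assume \(\mathbb{E}[\mu]<\infty\) and \(\mu\ge 0\) almost surely. Take any \(w\in\mathcal{W}(\mathbb{E}[\mu])\) and write it as \(w=\sum_{ij}\lambda_{ij} w_{ij}(\mathbb{E}[\mu])\) with fixed convex weights \(\lambda_{ij}\). Then define the selection \(\xi(\mu) := \sum_{ij}\lambda_{ij} w_{ij}(\mu)\), which lies in \(\mathcal{W}(\mu)\) for every realization. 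By the affinity from the first step and linearity of expectation, \(\mathbb{E}[\xi(\mu)] = \sum_{ij}\lambda_{ij} w_{ij}(\mathbb{E}[\mu]) = w\), so \(w\in\mathbb{E}[\mathcal{W}(\mu)]\). Measurability of \(\xi\) is immediate because it is affine in \(\mu\), and integrability follows from \(\mathbb{E}[\mu]<\infty\).

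Third, the diameter. By affinity, every point of \(\mathcal{W}(\mu)\) is a point of \(\operatorname{conv}\{w^n_{ij}+\mu w^t_{ij}\}\), and the diameter of a polytope is attained at a pair of its vertices. So
\[
\operatorname{diam}\mathcal{W}(\mu) = \max_{ij,kl}\bigl\|(w^n_{ij}-w^n_{kl}) + \mu\,(w^t_{ij}-w^t_{kl})\bigr\|.
\]
By the triangle inequality this is at most \(\operatorname{diam}\mathcal{W}_n + \mu\,\operatorname{diam}\mathcal{W}_t\), an affine (hence linear-growth) bound. It is also at least \(\mu\operatorname{diam}\mathcal{W}_t - \operatorname{diam}\mathcal{W}_n\). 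Dividing by \(\mu\) and letting \(\mu\to\infty\) gives \(\operatorname{diam}\mathcal{W}(\mu)/\mu \to \operatorname{diam}\mathcal{W}_t\). I would read ``scales linearly'' as exactly this pair of affine upper and lower bounds together with the asymptotic slope. I would also note that the diameter is a maximum of convex functions of \(\mu\), so it is convex and piecewise smooth.

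The main obstacle is interpretive rather than technical. The diameter is not literally \(c\mu\) unless \(\mathcal{W}_n\) is a single point, so the linear-scaling claim has to be stated as affine two-sided bounds with slope \(\operatorname{diam}\mathcal{W}_t\). Similarly, the Minkowski expectation has to be fixed as the Aumann integral before the inclusion is meaningful. With those conventions fixed, the fixed-weight selection argument in the second step is the substantive step, and it goes through directly.
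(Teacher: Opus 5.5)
Your proposal is correct, and it is more careful than the paper's own argument, which takes a different route in two places.

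\textbf{The inclusion.} The paper does not construct a selection. It asserts that the convex hull of affinely parameterized vertices is ``Minkowski-affine'' in \(\mu\), in effect \(\mathcal{W}(\mu) = \mathcal{W}_n + \mu\,\mathcal{W}_t\). Taking expectations would then give the inclusion, and in fact an equality. In general, however, only \(\operatorname{conv}\{w^n_{ij}+\mu w^t_{ij}\} \subseteq \mathcal{W}_n + \mu\,\mathcal{W}_t\) holds, because mixed points \(w^n_{ij} + \mu w^t_{kl}\) with \(ij \neq kl\) need not lie in the hull. Your fixed-weight selection \(\xi(\mu) = \sum_{ij}\lambda_{ij} w_{ij}(\mu)\) is what actually delivers the one-sided inclusion the lemma states. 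Equivalently, it is Jensen's inequality applied to the convex map \(\mu \mapsto h_{\mathcal{W}(\mu)}(d) = \max_{ij} d^\top w_{ij}(\mu)\).

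\textbf{The diameter.} The paper writes the closed form \(2\mu\max_{ij}\|w^t_{ij}\|\) (with unit normal forces). This is the diameter of \(\mu\,\mathcal{W}_t\) when the tangential generators come in opposing pairs, and it vanishes at \(\mu=0\). It therefore ignores the offset contributed by \(\mathcal{W}_n\), which is not a single point for any grasp with distinct contact normals. Your bounds
\(\mu\operatorname{diam}\mathcal{W}_t - \operatorname{diam}\mathcal{W}_n \le \operatorname{diam}\mathcal{W}(\mu) \le \operatorname{diam}\mathcal{W}_n + \mu\operatorname{diam}\mathcal{W}_t\),
together with the asymptotic slope \(\operatorname{diam}\mathcal{W}_t\), recover the paper's constant as the slope. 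Under opposing pairs, \(\operatorname{diam}\mathcal{W}_t = 2\max_{ij}\|w^t_{ij}\|\).

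\textbf{What each route buys.} The paper's version is brief and gives an explicit constant. Yours gives statements that actually hold. The cost is fixing the Aumann convention and weakening ``scales linearly'' to ``affinely bounded with linear asymptotics,'' which is the correct reading.
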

\begin{proof}
	From~\eqref{eq:coufcone}, \(\phi_{ij}(\mu) = f^{(i)}_{n,j} n^{(i)} + \mu
	f^{(i)}_{n,j} \hat{t}_{ij}\), where \(\hat{t}_{ij}\) is the unit tangent direction.
	This is affine in \(\mu\), so \(w_{ij}(\mu)\) is affine in \(\mu\) by
	~\eqref{eq:primwrench}. The convex hull of affinely-parameterized vertices is
	Minkowski-affine in \(\mu\), yielding the stated scaling. The diameter
	\(\mathrm{diam}(\mathcal{W}(\mu)) = 2\mu \max_{ij} \|f^{(i)}_{n,j}\| \cdot
	\|[{\hat{t}_{ij}}^\top, (p^{(i)}-o)\times \hat{t}_{ij}]^\top\|\) is linear in
	\(\mu\).
\end{proof}

In particular, for simplicity, we focus on variable friction. Extension to variable
object pose follows analogously. Throughout, \(\theta \in \mathbb{R}^{n_\theta}\)
collects the grasp parameters, i.e., the wrist pose and hand configuration that determine
the grasp \(g\) of \Cref{def:graspparam}, and we write
\(\mathcal{W}(\mu) \equiv \mathcal{W}(\mu;\theta)\), suppressing \(\theta\) where it
remains fixed. Because each primitive wrench is affine in \(\mu\)
(\Cref{lem:linearscaling}), the friction CVaR \(v_\beta\) of \eqref{eq:cvardef}
collapses the underlying friction prior into a single adverse coefficient. The next
definition names this friction value and the wrench body assembled at it.

\begin{definition}[Risk-Adjusted Friction and Wrench Body]\label{def:radjfriction}
	Let \(\mu \sim p_\mu\). The \emph{risk-adjusted friction} at confidence \(\beta\)
	is \(v_\beta := \CVaR(\mu)\), the mean of the adverse friction tail, and the
	\emph{risk-adjusted wrench body} is the GWS assembled at the risk-adjusted friction \(v_\beta\),
	\begin{equation}\label{eq:radjbody}
		\mathcal{W}^{(\beta)}(\theta) := \mathcal{W}(v_\beta;\theta).
	\end{equation}
	Since \(v_\beta \le \mathbb{E}[\mu]\) and the GWS nests in \(\mu\),
	\(\mathcal{W}^{(\beta)}\) shrinks as \(\beta\) rises. \(\mathcal{W}^{(\beta)}\)
	is the wrench capability the grasp retains once the friction coefficient is
	discounted to its adverse tail.
\end{definition}

\begin{definition}[Risk-Sensitive Directional Support]\label{def:riskgraspmets}
	For each direction \(d \in \mathbb{S}^5\), the unit sphere in \(\mathbb{R}^6\),
	the \emph{risk-sensitive directional support} of the uncertain GWS is
	\begin{equation}\label{eq:hbeta}
		h_\beta(d;\theta) := \CVaR\!\left(h_{\mathcal{W}(\mu)}(d)\right),
	\end{equation}
	where \(h_{\mathcal{W}(\mu)}(d) := \sup_{w \in \mathcal{W}(\mu)} d^\top w\) is
	the support function of the uncertain GWS.
\end{definition}

\begin{proposition}[Support Structure of the Risk Adjustment]\label{prop:radjbody}
The map $d \mapsto h_\beta(d;\theta)$ is a valid support function, convex
and positively homogeneous of degree one in $d$, and it is the support function of
the convex body
\begin{equation}
  \bar{\mathcal{W}}^{(\beta)}(\theta)
  :=\bigl\{w\in\mathbb{R}^6 : d^\top w \le h_\beta(d;\theta),\;\forall d\in\mathbb{S}^5\bigr\},
\end{equation}
which contains the risk-adjusted wrench body:
$\mathcal{W}^{(\beta)} \subseteq \bar{\mathcal{W}}^{(\beta)}$, with
$h_{\mathcal{W}^{(\beta)}}(d) = h_\beta(d;\theta)$ in every direction $d$ whose
maximizing primitive wrench does not change over the friction tail.
\end{proposition}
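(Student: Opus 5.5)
The plan is to reduce everything to the fact that, for each fixed direction \(d\), the random support value \(h_{\mathcal{W}(\mu)}(d)\) is a \emph{nondecreasing, convex} function of the single scalar \(\mu\). From there, the support-function claims follow from a quantile representation of the CVaR, and the containment follows from Jensen's inequality.

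\textbf{Step 1 (scalar structure).} Write \(\psi_d(\mu) := h_{\mathcal{W}(\mu)}(d) = \max_{i,j} d^\top w_{ij}(\mu)\). By \Cref{lem:linearscaling} each \(d^\top w_{ij}(\mu)\) is affine in \(\mu\), so \(\psi_d\) is a pointwise maximum of affine functions. Hence \(\psi_d\) is convex and continuous. By the nesting \(\mathcal{W}(\mu')\subseteq\mathcal{W}(\mu)\) for \(\mu'\le\mu\), it is also nondecreasing.

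\textbf{Step 2 (quantile representation).} Because \(\psi_d\) is continuous and nondecreasing, the quantile function of \(\psi_d(\mu)\) is \(F^{-1}_{\psi_d(\mu)}(u)=\psi_d(F_\mu^{-1}(u))\), up to a null set of \(u\). Substituting into \eqref{eq:cvardef} gives
\begin{equation*}
h_\beta(d;\theta)=\frac{1}{1-\beta}\int_0^{1-\beta} h_{\mathcal{W}(F_\mu^{-1}(u))}(d)\,du .
\end{equation*}
This is the step I expect to be the crux. The lower-tail CVaR is only superadditive in general, which is the wrong direction for the sublinearity of a support function. The point is that all the variables \(\psi_d(\mu)\), \(d\in\mathbb{S}^5\), are comonotone, being nondecreasing functions of the same \(\mu\), so the CVaR acts additively on them. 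I would check the quantile identity carefully at atoms of \(p_\mu\), using left-continuity of \(F_\mu^{-1}\) and continuity of \(\psi_d\).

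\textbf{Step 3 (support-function properties and identification of the body).} The displayed integrand is, for each \(u\), a support function in \(d\), hence convex and positively homogeneous. Averaging over \(u\) preserves both properties, so \(h_\beta(\cdot;\theta)\) is sublinear, i.e.\ a valid support function. More concretely, it is the support function of the Minkowski average \(\frac{1}{1-\beta}\int_0^{1-\beta}\mathcal{W}(F_\mu^{-1}(u))\,du\). By the standard correspondence between closed convex sets and their support functions, this body coincides with the intersection of half-spaces \(\bar{\mathcal{W}}^{(\beta)}(\theta)\), and \(h_{\bar{\mathcal{W}}^{(\beta)}}=h_\beta\).

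\textbf{Step 4 (containment and equality case).} By Jensen's inequality applied to the convex \(\psi_d\) and the tail-conditional law of \(\mu\), whose mean is \(v_\beta\),
\begin{equation*}
h_\beta(d;\theta)\ \ge\ \psi_d(v_\beta)=h_{\mathcal{W}^{(\beta)}}(d)
\end{equation*}
for every \(d\). Comparing support functions then yields \(\mathcal{W}^{(\beta)}\subseteq\bar{\mathcal{W}}^{(\beta)}\).

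For the equality case, suppose a single index \((i,j)\) attains the maximum in \(\psi_d(\mu)\) for all \(\mu\) in the adverse tail, i.e.\ on the range of \(F_\mu^{-1}\) over \((0,1-\beta]\). Then \(\psi_d\) is affine there, Jensen's inequality holds with equality, and \(h_{\mathcal{W}^{(\beta)}}(d)=h_\beta(d;\theta)\).
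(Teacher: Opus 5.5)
Your proposal is correct and follows essentially the paper's route. The paper also uses monotonicity of $h_{\mathcal{W}(\mu)}(d)$ in $\mu$ to get one adverse tail shared by every direction (your comonotonicity step), deduces sublinearity from averaging support functions, and gets the inclusion and equality case from $\max_{ij}\mathbb{E}[d^\top w_{ij}(\mu)\mid \mu\le\VaR(\mu)] \le \mathbb{E}[\max_{ij} d^\top w_{ij}(\mu)\mid \mu\le\VaR(\mu)]$, which is your Jensen step; your quantile-integral form is slightly more careful, since it stays exact when $p_\mu$ has an atom at $\VaR(\mu)$, where conditioning on $\{\mu \le \VaR(\mu)\}$ no longer reproduces the CVaR.
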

\begin{proof}
Every $h_{\mathcal{W}(\mu)}(d)$ is non-decreasing in $\mu$, since the GWS nests in
the friction coefficient. The tail event realizing the lower-tail CVaR is
therefore $\{\mu \le \VaR(\mu)\}$ for every direction simultaneously, so
$h_\beta(d;\theta) = \mathbb{E}\bigl[h_{\mathcal{W}(\mu)}(d) \mid \mu \le
\VaR(\mu)\bigr]$ is a tail-conditional expectation of support functions and hence
convex and positively homogeneous in $d$. For the inclusion, each vertex support
$d^\top w_{ij}(\mu)$ is affine in $\mu$ by Lemma~\ref{lem:linearscaling}, so
$h_{\mathcal{W}^{(\beta)}}(d) = \max_{ij}\, \mathbb{E}[d^\top w_{ij}(\mu) \mid \mu
\le \VaR(\mu)] \le \mathbb{E}[\max_{ij}\, d^\top w_{ij}(\mu) \mid \mu \le
\VaR(\mu)] = h_\beta(d;\theta)$, with equality when one primitive wrench attains
the maximum throughout the tail.
\end{proof}

Beyond this directional support, assessing force closure in the distributional setting demands a probabilistic definition over the whole friction prior, so we formalize probabilistic (force) closure next as a scalar condition.

\begin{definition}[Probabilistic Closure]\label{def:probclosure}
	A grasp \(g \in \mathcal{G}\) satisfies \emph{probabilistic closure at level
		$\beta$} if \(
		\CVaR(\eps(g,\mu)) > 0,\)
	which implies force closure in all but the worst $(1-\beta)$-fraction of uncertainty
	realizations under the assumed model.
\end{definition}
\begin{remark}[Necessity of a Distribution-Dependent Quality]
	A quality that ignores the friction distribution cannot certify probabilistic
	closure. Take a grasp that is force-closed at the nominal friction $\mu_\text{nom}$ but
	relies on friction, so $\eps(g,\mu_\text{nom}) > 0$ while its friction threshold
	$\mu^* := \inf\{\mu : \eps(g,\mu) > 0\}$ is strictly positive and the grasp loses
	closure, $\eps(g,\mu) \le 0$, for every $\mu < \mu^*$. Fix $\delta \in (0,1)$ and
	$\eta \in (0, 1-\delta)$ and pick any friction value $\mu_- < \mu^*$. The two-point
	distribution $p_\delta$ that places mass $\delta + \eta$ at $\mu_-$ and mass
	$1 - \delta - \eta$ at $\mu_\text{nom}$ has closure-loss probability
	$\Pr_{p_\delta}[\eps(g,\mu) \le 0] = \delta + \eta > \delta$. A quality $Q(g)$ that
	does not depend on $p_\delta$ returns the same value here as under a distribution
	concentrated near $\mu_\text{nom}$ with negligible closure loss, so it cannot separate the
	two and cannot guarantee closure with probability at least $\beta$. The
	risk-sensitive condition of \Cref{def:probclosure} depends on the adverse tail of the
	distribution and avoids the gap described above.
\end{remark}

Given this, the probabilistic closure definition of~\Cref{def:probclosure} yields a force-closure certificate. The following theorem states, formally, the probability with which a grasp is force-closed once its quality margin turns positive.
\begin{theorem}[Probabilistic Closure Certificate]\label{thm:certificate}
	Let $\mu \sim p_\mu$. If $\CVaR(\eps(g,\mu)) > 0$, i.e., if $g$ satisfies
	probabilistic closure at level $\beta$ per \Cref{def:probclosure}, then
	\begin{equation}
		\Pr\bigl[\eps(g,\mu) > 0\bigr] \ge \beta,
	\end{equation}
	i.e., the grasp is force-closed with probability at least $\beta$.
\end{theorem}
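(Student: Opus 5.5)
The argument is a pure quantile comparison for the scalar random variable \(Z := \eps(g,\mu)\), using only the definition of the lower-tail CVaR in \eqref{eq:cvardef} and the inequality \(\CVaR(Z) \le \VaR(Z)\) stated right after it. No grasp geometry is needed beyond \(Z\) being a measurable function of \(\mu\). I would also assume it is integrable on its lower tail so the CVaR is finite.

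\textbf{Step 1 (CVaR to VaR).} From the hypothesis \(\CVaR(Z) > 0\), obtain \(\VaR(Z) = F_Z^{-1}(1-\beta) > 0\). To check the inequality \(\CVaR \le \VaR\) directly, note that \(F_Z^{-1}\) is non-decreasing. Hence \(F_Z^{-1}(u) \le F_Z^{-1}(1-\beta)\) for every \(u \in (0,1-\beta]\), and averaging over \([0,1-\beta]\) gives the bound.

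\textbf{Step 2 (VaR to probability).} Fix the convention \(F_Z^{-1}(u) := \inf\{z : F_Z(z) \ge u\}\). If \(F_Z^{-1}(1-\beta) > 0\), then \(0\) lies strictly below this infimum, so \(F_Z(0) < 1-\beta\). Therefore
\[
\Pr[Z > 0] = 1 - F_Z(0) > \beta \ge \beta .
\]

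\textbf{Alternative route.} One can skip Step 1 by contradiction. Suppose \(\Pr[Z \le 0] \ge 1-\beta\), i.e.\ \(F_Z(0) \ge 1-\beta\). Then \(F_Z^{-1}(u) \le 0\) for all \(u \le 1-\beta\), so the integral in \eqref{eq:cvardef} is nonpositive, contradicting \(\CVaR(Z) > 0\). This version handles atoms and ties cleanly, and I would present it as the main proof.

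\textbf{Main obstacle.} The only delicate point is the quantile convention. With a right-continuous or generalized inverse, atoms of \(p_\mu\), and hence of \(Z\), could break strict versus non-strict inequalities. The contradiction argument avoids this, because it uses only that \(F_Z^{-1}(u) \le 0\) whenever \(F_Z(0) \ge u\), which holds for the left-continuous generalized inverse above. I will state that convention explicitly. I will also remark that the bound is stated as \(\ge \beta\), although the argument yields the strict inequality.
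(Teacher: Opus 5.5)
Your proposal is correct and follows the paper's proof. The paper likewise passes from $\CVaR(Z) > 0$ to $\VaR(Z) > 0$ via $\CVaR \le \VaR$ and then reads off $\Pr[Z \le 0] < 1-\beta$; your Steps 1--2 just make the quantile convention explicit, and your contradiction variant is the same monotone-quantile argument run contrapositively, with the minor advantage that it never evaluates $F_Z^{-1}$ at the single point $1-\beta$ and so yields the strict bound $\Pr[Z > 0] > \beta$ under either quantile convention.
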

\begin{proof}
	By the lower-tail definition of the CVaR risk measure~\cite{rockafellarOptimizationConditionalValueatrisk2000}, $\CVaR(Z) > 0$ implies $\VaR(Z) > 0$, i.e.,
	the $(1-\beta)$-quantile of $Z$ is positive, so $\Pr[Z \le 0] < 1-\beta$ and
	$\Pr[Z > 0] > \beta$. Applying this to $Z = \eps(g,\mu)$ and recalling that
	$\eps(g,\mu) > 0$ certifies force closure gives the result.
\end{proof}

\section{Problem Formulation}\label{prb:form}
Problem~\ref{prob:rsrgs} states the synthesis target our metric family serves.

\begin{problem}[Risk-Sensitive Robust Grasp Synthesis]\label{prob:rsrgs}
Given an object model, a robotic hand model with configuration space \(\mathbb{R}^{n_h}\), a friction prior \(p_\mu\), and a confidence level \(\beta \in (0,1)\), find
\begin{equation}\label{eq:rsrgs}
    g^* \in \arg\max_{g \in \mathcal{G}_{\mathrm{feas}}}\; \CVaR\!\bigl(\eps(g,\mu)\bigr),
    \quad \mu \sim p_\mu,
\end{equation}
where \(\mathcal{G}_{\mathrm{feas}} \subseteq \mathcal{G}\) collects the grasps satisfying joint limits, self-collision avoidance, and contact-formation feasibility for the arm-hand-object system of~\Cref{fig:grspframes}, and \(\eps(g,\mu)\) is the Ferrari-Canny margin of grasp \(g\) under friction realization \(\mu\). By~\Cref{thm:certificate}, any \(g^*\) attaining a positive risk-sensitive margin admits probabilistic closure at level \(\beta\) in the sense of~\Cref{def:probclosure} and therefore secures force closure with probability at least \(\beta\) over the friction realizations of \(p_\mu\).
\end{problem}

Specifically, our objective in~\eqref{eq:rsrgs} couples grasp parameters to the tail of the friction-induced wrench-margin distribution, so the optimizer prefers grasps whose wrench capacity withstands the adverse \((1-\beta)\)-fraction of friction realizations rather than the nominal one. \Cref{sec:rsgqual} develops our metric family that scores candidate grasps, and \Cref{sec:theory} establishes the monotonicity, differentiability, and closure-certificate properties that ground~\eqref{eq:rsrgs}. \Cref{sec:synthesis} then describes the sample, certify, and score procedure that approximates \(g^*\). The studies instantiate two computable quantities, our analytic margin \(\epsb\) for scoring and selection, and, as the synthesis surrogate of \eqref{eq:rsrgs}, a risk-adjusted extension of FRoGGeR's min-weight metric~\cite{li_frogger_2023}, \(\ell^{*(\beta)}\), the min-weight program of \eqref{eq:minweight} evaluated on the risk-adjusted body \(\mathcal{W}^{(\beta)}\) (\Cref{ssec:synth_objective}).

\section{Risk-Sensitive Grasp Quality}\label{sec:rsgqual}
Classical grasp quality metrics such as force closure and $\eps$-metrics evaluate
feasibility at a single instant under nominal assumptions, and provide no guarantee
once the realized contact friction departs from the value assumed at synthesis. We
therefore treat grasp quality as a property of a distribution of friction
realizations rather than a single value. Published friction tables
\cite{CoefficientFrictionReference} report a range of values for a given material
pair, motivating a treatment via intervals or distributions rather than a point
estimate. This motivates the following definition.

\begin{definition}[Risk-Adjusted Margin]\label{def:rsepsmet}
	The risk-adjusted margin at confidence
	\(\beta \in (0,1)\), denoted \(\epsb\), is the inscribed-ball radius of the
	risk-adjusted wrench body of \Cref{def:radjfriction},
	\begin{equation}\label{eq:rseps}
		\epsb(g) := \min_{\|d\|=1} h_{\mathcal{W}^{(\beta)}}(d)
		= \eps(g, v_\beta),
	\end{equation}
	the Ferrari-Canny margin of the GWS assembled at the risk-adjusted friction
	\(v_\beta\) (\Cref{fig:construction}). We define \(\epsb\) as a signed margin, taking the signed
	origin-to-facet distance when the origin leaves the body, so a negative \(\epsb\)
	measures the margin by which the adverse friction tail violates closure. Our metric
	recovers the classical \(\eps\) under a point-mass prior, approaches the
	mean-friction margin \(\eps(g, \mathbb{E}[\mu])\) as \(\beta \to 0\), and the
	worst-case margin \(\eps(g, \operatorname{ess\,inf} \mu)\) as
	\(\beta \to 1\).
\end{definition}

\begin{figure}[t]
	\centering
	\includegraphics[width=\linewidth]{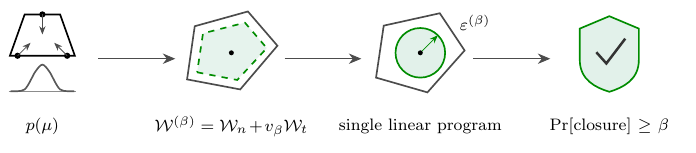}
	\caption{\textbf{Constructing the Risk-Adjusted Margin.} From a contact set and
		friction prior \(p(\mu)\), the risk-adjusted grasp wrench space
		\(\mathcal{W}^{(\beta)} = \mathcal{W}_n + v_\beta\mathcal{W}_t\) scales the
		tangential wrench body by the friction CVaR \(v_\beta\); a single linear
		program inscribes the largest ball of radius \(\epsb\) in
		\(\mathcal{W}^{(\beta)}\) (\Cref{eq:rseps}), whose positivity certifies force
		closure with probability at least \(\beta\) (Theorem~\ref{thm:certificate}).}
	\label{fig:construction}
\end{figure}

\begin{table}[t]
	\caption{Common grasp quality functions and their risk-sensitive adaptations. Adapted from Table~2 in~\cite{roaGraspQualityMeasures2015}. Prob.\ = probabilistic uncertainty model (see \Cref{ssec:friction}). The shaded row is the member the present paper instantiates analytically. The \(\ell^{*(\beta)}\) member drives the synthesis study of \Cref{ssec:synth_objective}.}
	\label{tab:qualfuncs}
	\centering
	\footnotesize
	\setlength{\tabcolsep}{2.5pt}
	\renewcommand{\arraystretch}{1.05}
	\begin{tabular}{@{}l l l l@{}}
		\toprule
		Quality \(Q\)                       & Risk-sens.\ \(\Qb\)   & Unc. variable\ & Unc.\ model \\
		\midrule
		\rowcolor{oursmint}
		\(\eps\) (Ferrari-Canny)           & \(\epsb\)              & \(\mu\)         & Prob.        \\
		Min-weight~\cite{li_frogger_2023}   & \(\ell^{*(\beta)}\)   & \(\mu\)         & Prob.        \\
		\bottomrule
	\end{tabular}
\end{table}

Concretely, we treat grasp robustness as a quasi-static property of the synthesized grasp under
uncertain contact friction. The realized grasp-quality margin is the Ferrari-Canny
metric $\eps(g,\mu)$, positive when the grasp is force-closed and negative
otherwise, and we write $\eps_\text{nom} := \eps(g, \mu_\text{nom})$ for its value
at the nominal friction $\mu_\text{nom}$. Rather than evaluating the realized margin in expectation, we adopt a
risk-sensitive formulation,
\begin{equation}\label{eq:Qbeta}
	\Qb(\theta) = \CVaR\!\left(\eps(g,\mu)\right), \qquad \mu \sim p_\mu,
\end{equation}
the mean of the worst $(1-\beta)$-fraction of margin outcomes under the lower-tail
convention of \eqref{eq:cvardef}. Two risk summaries of the same margin
distribution therefore coexist: $\epsb$ discounts the friction coefficient first
and evaluates the margin once, while $\Qb$ evaluates the margin at every friction
realization and averages its adverse tail. Proposition~\ref{prop:ordering}
relates the two, and each admits the closure certificate of \Cref{sec:theory}.
For $N$ Monte Carlo friction samples
For samples $\{\mu_i\}_{i=1}^N \sim p_\mu$, we compute $\Qb(\theta)$ by sorting
$\{\eps(g,\mu_i)\}$ in ascending order and averaging the worst
$\lceil (1-\beta) N \rceil$ values, which penalizes low-probability, high-impact
failures. The analytic protocol of \Cref{ssec:reeval} avoids the Monte Carlo sampling
entirely by evaluating $\epsb$. A dynamic extension that evaluates
the margin over a finite-horizon execution rollout, rather than at the grasp
instant, is the domain of the execution-time controller and lies outside the
scope of the present study.

\section{Theoretical Properties}\label{sec:theory}
Three properties make $\Qb$ usable in practice. The closure certificate of
Theorem~\ref{thm:certificate}, established in \Cref{sec:radjgws}, already turns
a positive margin into a force-closure guarantee that holds with probability at
least $\beta$. Monotonicity in $\beta$ (Theorem~\ref{thm:monotone}) lets a user
dial conservatism, and differentiability (Theorem~\ref{thm:diff}) exposes a
gradient for optimization.

\begin{theorem}[Monotonicity of $\Qb$ in $\beta$]\label{thm:monotone}
	For fixed grasp parameters $\theta$, $\Qb(\theta)$ is non-increasing in $\beta$.
\end{theorem}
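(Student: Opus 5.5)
The plan is to work directly from the quantile representation \eqref{eq:cvardef}. Fix $\theta$ and write $Z := \eps(g,\mu)$ with $\mu \sim p_\mu$. Since $\theta$ is fixed, $Z$ is a fixed scalar random variable, and its quantile function $F_Z^{-1}$ does not depend on $\beta$. Set $\alpha := 1-\beta \in (0,1)$. Then
\begin{equation*}
  \Qb(\theta) = \frac{1}{\alpha}\int_0^{\alpha} F_Z^{-1}(u)\,du =: G(\alpha).
\end{equation*}
So $\Qb(\theta)$ is the running average of $F_Z^{-1}$ over $[0,\alpha]$. As $\beta$ increases, $\alpha$ decreases, so it suffices to show that $G$ is non-decreasing in $\alpha$.

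The key step is the elementary fact that the running average of a non-decreasing function is non-decreasing. The quantile function $F_Z^{-1}$ is non-decreasing on $(0,1)$. For $0 < \alpha_1 < \alpha_2 < 1$, split the integral:
\begin{equation*}
  G(\alpha_2) = \frac{\alpha_1}{\alpha_2}G(\alpha_1) + \frac{1}{\alpha_2}\int_{\alpha_1}^{\alpha_2} F_Z^{-1}(u)\,du .
\end{equation*}
On $[\alpha_1,\alpha_2]$ we have $F_Z^{-1}(u) \ge F_Z^{-1}(\alpha_1)$. On $[0,\alpha_1]$ we have $F_Z^{-1}(u) \le F_Z^{-1}(\alpha_1)$, which gives $G(\alpha_1) \le F_Z^{-1}(\alpha_1)$. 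Hence the second term is at least $\frac{\alpha_2-\alpha_1}{\alpha_2}F_Z^{-1}(\alpha_1)$, which in turn is at least $\frac{\alpha_2-\alpha_1}{\alpha_2}G(\alpha_1)$. Substituting yields $G(\alpha_2) \ge G(\alpha_1)$. Translating back, $\beta_1 < \beta_2$ implies $\Qb(\theta)$ at $\beta_2$ is at most its value at $\beta_1$. This is the stated claim.

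The main obstacle is integrability rather than the inequality itself. $G(\alpha)$ must be finite, or at least the comparison must make sense if it equals $-\infty$. I would first note that $\eps(g,\mu)$ is bounded for bounded $\mu$, because it is an inscribed-ball or signed facet distance of a polytope whose vertices are affine in $\mu$ (Lemma~\ref{lem:linearscaling}). I would then assume $\mu$ has finite first moment on the support of $p_\mu$, so that $F_Z^{-1}$ is integrable near $0$. If $F_Z^{-1}$ is unbounded below, the same argument still holds with values in $[-\infty,\infty)$.

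As a remark, the argument uses only monotonicity of $F_Z^{-1}$, not any structure of $\eps$. The same proof therefore applies verbatim to $v_\beta = \CVaR(\mu)$. Combined with the GWS nesting $\mathcal{W}(\mu')\subseteq\mathcal{W}(\mu)$ for $\mu' \le \mu$, this gives the analogous monotonicity of $\epsb = \eps(g,v_\beta)$.
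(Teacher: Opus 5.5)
Your proof is correct and takes the same route as the paper: both write $\Qb(\theta)$ as the running average of the non-decreasing quantile function $F_Z^{-1}$ over $[0,1-\beta]$ and conclude that shrinking the window can only lower the average. Your splitting identity $G(\alpha_2) = \tfrac{\alpha_1}{\alpha_2}G(\alpha_1) + \tfrac{1}{\alpha_2}\int_{\alpha_1}^{\alpha_2}F_Z^{-1}(u)\,du$ makes rigorous the step the paper states informally, and your integrability caveat and the closing remark on $\epsb$ (which the paper itself notes via \Cref{rem:epsmu_curve}) are sound additions.
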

\begin{proof}
	Under the lower-tail convention, $\Qb(\theta) = \mathrm{CVaR}_\beta(Z)$ with
	$Z = \eps(g,\mu)$ admits the quantile-integral representation
	\begin{equation*}
		\mathrm{CVaR}_\beta(Z) = \frac{1}{1-\beta}\int_0^{1-\beta} F_Z^{-1}(u)\,du,
	\end{equation*}
	where $F_Z^{-1}$ is the quantile function of $Z$. The right-hand side is the
	running average of $F_Z^{-1}$ over $[0, 1-\beta]$, and $F_Z^{-1}$ is
	non-decreasing on $[0,1]$. For $\beta' > \beta$, the upper endpoint shrinks from
	$1-\beta$ to $1-\beta'$, so the average moves toward the left end of the
	quantile function, which is smaller. Hence $\mathrm{CVaR}_{\beta'}(Z) \le
	\mathrm{CVaR}_\beta(Z)$.
\end{proof}

\begin{corollary}
	Higher risk aversion ($\beta \nearrow 1$) yields more conservative synthesis
	objectives, at the cost of potentially over-penalizing grasps that are robust under
	typical disturbances.
\end{corollary}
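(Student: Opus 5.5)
The corollary is an interpretive reading of Theorem~\ref{thm:monotone}, so the plan is to make both halves of it precise and then derive each from results already in place. First, "more conservative" will be formalised as follows: for $\beta' > \beta$ and every fixed $\theta$, $\mathrm{CVaR}_{\beta'}(\eps(g,\mu)) \le \mathrm{CVaR}_{\beta}(\eps(g,\mu))$. This inequality is exactly Theorem~\ref{thm:monotone}, applied pointwise in $\theta$. Two consequences follow for the feasible set of Problem~\ref{prob:rsrgs}.

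\begin{itemize}
\item The certified set $\{\theta : \mathrm{Q}_{\beta'}(\theta) > 0\}$ is contained in $\{\theta : \Qb(\theta) > 0\}$. Raising $\beta$ therefore can only shrink the set of grasps that admit probabilistic closure (Definition~\ref{def:probclosure}).
\item Taking the supremum over $\mathcal{G}_{\mathrm{feas}}$ preserves the inequality, so the optimal value of \eqref{eq:rsrgs} is non-increasing in $\beta$.
\end{itemize}

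Meanwhile, the certificate of Theorem~\ref{thm:certificate} strengthens, since the guaranteed closure probability $\beta$ increases. For the analytic margin $\epsb$, the same conservatism follows from Definition~\ref{def:radjfriction}. The quantity $v_\beta$ is itself a lower-tail CVaR of $\mu$, so it is non-increasing in $\beta$ by the identical quantile-average argument. Because the GWS nests in $\mu$, $\mathcal{W}(v_{\beta'}) \subseteq \mathcal{W}(v_\beta)$. The inscribed-ball radius is monotone under inclusion, since $\min_d h$ is monotone in the body, and this gives $\eps^{(\beta')} \le \epsb$.

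The second half, "over-penalizing", I would establish by exhibiting an example rather than proving a general statement. Take the friction-dependent grasp of the Remark, with threshold $\mu^* > 0$. Use a prior that places a small mass $\eta < 1-\beta$ below $\mu^*$ and the rest at $\mu_\text{nom}$. At level $\beta$ the tail excludes the bad atom only partially, so I would choose $\eta$ small enough that $\Qb > 0$. As $\beta \to 1$, $\Qb \to \eps(g, \operatorname{ess\,inf}\mu) \le 0$, because the bad atom dominates the shrinking tail. The grasp is then rejected even though it is force-closed with probability $1-\eta$, which can be made arbitrarily close to one. This shows that grasps robust under typical friction can be discarded at high $\beta$.

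The main obstacle is the limiting behaviour as $\beta \to 1$. Establishing $\mathrm{CVaR}_\beta(Z) \to \operatorname{ess\,inf} Z$ requires the left-continuity of the quantile function at $0$, and I would justify it by the average of $F_Z^{-1}$ over $[0,1-\beta]$ collapsing to $F_Z^{-1}(0^+)$. A secondary check is that the signed-margin convention keeps inclusion monotonicity valid when the origin leaves the body. I would verify this through the support-function form $\min_{\|d\|=1} h(d)$, which is monotone regardless of sign.
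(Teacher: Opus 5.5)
Your argument is correct, and its first half matches the paper: the paper gives no separate proof of this corollary, reads it directly off Theorem~\ref{thm:monotone}, and leaves the cost clause as informal commentary. You go further in three ways. You turn ``more conservative'' into concrete consequences for Problem~\ref{prob:rsrgs}: the certified set shrinks, and the optimal value of \eqref{eq:rsrgs} is non-increasing in $\beta$. You transfer the monotonicity to $\epsb$ through $v_\beta$ and GWS nesting, an argument the paper makes only later, in \Cref{ssec:cvar_sensitivity} via \Cref{rem:epsmu_curve}. And you give the over-penalization clause an explicit witness built from the two-point prior of the Necessity remark. The paper's reading buys brevity; yours turns both halves into checkable statements.

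A few points to tighten.
\begin{itemize}
\item With $\eta<1-\beta$, the tail contains the whole bad atom; it does not exclude it partially. Write $z_-=\eps(g,\mu_-)\le 0<z_+=\eps(g,\mu_\text{nom})$. Then $\Qb=z_+-\eta(z_+-z_-)/(1-\beta)$, which is positive iff $1-\beta>\eta(z_+-z_-)/z_+$.
\item The limit you call the main obstacle is not one. Once $1-\beta\le\eta$, the quantile function equals $z_-$ on $(0,1-\beta]$, so $\Qb=z_-$ exactly. In general, $\CVaR(Z)\to\operatorname{ess\,inf}Z$ needs only the monotonicity of $F_Z^{-1}$ (its right limit at $0$), not any continuity property.
\item The same formula gives a sharper sense of over-penalization than your limit does. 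If $z_-<0$ and $\eta$ is small, then for $\beta\in[1-\eta(z_+-z_-)/z_+,\,1-\eta)$ the grasp has $\Pr[\eps(g,\mu)>0]=1-\eta>\beta$ and even $\VaR(\eps(g,\mu))=z_+>0$, yet $\Qb\le 0$. So the CVaR objective rejects a grasp that meets the very closure level $\beta$ encodes. Rejection for $\beta>1-\eta$, which is all your limit shows, is arguably warranted.
\item Your $\epsb$ step needs $v_{\beta'}\ge 0$. Nesting is stated only for $0\le\mu'\le\mu$, and the opposing-pair structure gives $\mathcal{W}(-v)=\mathcal{W}(v)$. The argument therefore holds when $\mu\ge 0$ almost surely, but not for the Gaussian prior as $\beta\to 1$, where $v_\beta\to-\infty$.
\end{itemize}
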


\begin{theorem}[Differentiability of $\Qb$]\label{thm:diff}
	If $\eps(g,\mu)$ is differentiable in $\theta$ for almost every $\mu$, then
	$\Qb(\theta) = \CVaR(\eps(g,\mu))$ is differentiable in $\theta$ almost everywhere,
	with
	\begin{equation}\label{eq:cvargrad}
		\nabla_\theta \Qb(\theta) =
		\mathbb{E}\!\bigl[\nabla_\theta \eps(g,\mu) \;\big|\;
			\eps(g,\mu) \le \VaR(\eps(g,\mu))\bigr].
	\end{equation}
\end{theorem}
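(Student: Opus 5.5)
The plan is to use the Rockafellar--Uryasev variational representation of the lower-tail CVaR~\cite{rockafellarOptimizationConditionalValueatrisk2000} rather than the quantile integral \eqref{eq:cvardef}. For $Z=\eps(g,\mu)$ this representation reads
\begin{equation*}
	\Qb(\theta) = \max_{t\in\mathbb{R}} \Bigl\{ t - \tfrac{1}{1-\beta}\,\mathbb{E}\bigl[(t-\eps(g,\mu))_+\bigr] \Bigr\}.
\end{equation*}
The maximizer set is the interval of $(1-\beta)$-quantiles of $Z$, and it contains $\VaR(Z)$. This turns $\Qb$ into a parametric optimum of an objective $\Phi(\theta,t)$. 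The argument then becomes an envelope (Danskin) argument in $\theta$ combined with differentiation under the expectation.

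\textbf{Step 1: the inner functional.} For fixed $t$, let $\Psi(\theta,t):=\mathbb{E}[(t-\eps(g,\mu))_+]$. The map $s\mapsto (t-s)_+$ is Lipschitz and is differentiable except at $s=t$. By the hypothesis, $\theta\mapsto\eps(g,\mu)$ is differentiable for a.e.\ $\mu$. I will add the mild regularity implicit in the statement: a local Lipschitz bound $|\eps(g(\theta),\mu)-\eps(g(\theta'),\mu)|\le L(\mu)\|\theta-\theta'\|$ with $\mathbb{E}[L]<\infty$. This is reasonable here because the Ferrari--Canny margin is a min of support functions of a polytope whose vertices are smooth in $\theta$. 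Dominated convergence then gives
\begin{equation*}
	\nabla_\theta \Psi = -\mathbb{E}\bigl[\mathbf{1}\{\eps(g,\mu)<t\}\,\nabla_\theta\eps(g,\mu)\bigr],
\end{equation*}
provided $\Pr[\eps(g,\mu)=t]=0$.

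\textbf{Step 2: envelope argument.} $\Phi(\theta,t)$ is concave in $t$, and its maximizer $t^*(\theta)=\VaR(Z)$ is unique whenever the distribution of $Z$ has no flat region at level $1-\beta$. Under these conditions, Danskin's theorem gives
\begin{equation*}
	\nabla_\theta\Qb=\nabla_\theta\Phi(\theta,t^*) = \tfrac{1}{1-\beta}\,\mathbb{E}\bigl[\mathbf{1}\{\eps\le t^*\}\nabla_\theta\eps\bigr].
\end{equation*}
When there is no atom at $t^*$, we have $\Pr[\eps\le t^*]=1-\beta$. The right-hand side is then exactly the conditional expectation \eqref{eq:cvargrad}.

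\textbf{Step 3: the main obstacle.} The hard part is justifying ``almost everywhere in $\theta$.'' Two degeneracies must be excluded: an atom of $Z$ at $\VaR(Z)$, and non-uniqueness of the quantile. These can genuinely occur. For example, a grasp may be non-closed on a positive-mass set of $\mu$ where $\eps$ is constant. My first approach is to avoid checking these conditions pointwise. Instead I would note that $\Qb$ is locally Lipschitz in $\theta$, since CVaR is $1$-Lipschitz in $L^1$ and $\eps$ is $L(\mu)$-Lipschitz. Rademacher's theorem then yields differentiability of $\Qb$ for a.e.\ $\theta$. At any point of differentiability, the one-sided Danskin derivative
\begin{equation*}
	\max_{t\in T^*}\ \tfrac{1}{1-\beta}\partial\Psi
\end{equation*}
must be linear. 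Comparing it with the formula of Step 2 pins down the gradient. If an atom is present at $t^*$, I would use the standard tie-splitting weight on $\{\eps=t^*\}$ and interpret the conditional expectation in \eqref{eq:cvargrad} accordingly. This interpretation reduces to the stated formula under the no-atom condition, which I expect to hold for a.e.\ $\theta$ when the friction prior is continuous.
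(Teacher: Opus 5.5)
Your proposal is correct and follows the same route as the paper. The paper's proof likewise applies the envelope (Danskin) theorem to the Rockafellar--Uryasev variational form of the lower-tail CVaR, takes $\VaR(\eps(g,\mu))$ as the optimizer, and obtains the tail-indicator gradient \eqref{eq:cvargrad}. Your version is more careful than the paper's one-line argument: it makes the integrable Lipschitz domination explicit, gets the almost-everywhere claim from Rademacher, and isolates the no-atom condition at $\VaR(\eps(g,\mu))$. That condition is not implied by continuity of $p_\mu$ alone, because $\eps(g,\cdot)$ can be flat in $\mu$, so it is best stated as an assumption. The only slip is that CVaR is $\tfrac{1}{1-\beta}$-Lipschitz in $L^1$, not $1$-Lipschitz, which does not affect the argument.
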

\begin{proof}
	The Ferrari-Canny margin $\eps(g,\mu)$ is the value of a linear program and is
	piecewise smooth in $\theta$, differentiable away from a measure-zero set of
	active-vertex changes. For a continuous friction distribution, CVaR is
	differentiable by the envelope (Danskin) theorem applied to the
	Rockafellar-Uryasev infimum
	representation~\cite{rockafellarOptimizationConditionalValueatrisk2000}, with the
	infimizing $v^\star$ equal to
	$\VaR$, which yields~\eqref{eq:cvargrad}. In the sample-average approximation, CVaR
	is piecewise linear in the sorted tail values, with subgradients available via the
	sorting permutation, which is a.e.\ differentiable.
\end{proof}

Taken together, Theorem~\ref{thm:certificate} certifies grasps through $\Qb$, whereas the analytic
protocol of \Cref{ssec:reeval} evaluates $\epsb$. The next two results connect these
margins. The first ties them to the risk-sensitive directional support
of \Cref{def:riskgraspmets}, and the second gives $\epsb$ its own closure
certificate.

\begin{proposition}[Relations Among the Risk-Sensitive Margins]\label{prop:ordering}
For any $\theta$ and $\beta \in (0,1)$,
\begin{equation}\label{eq:ordering}
	\Qb(\theta) \le \min_{\|d\|=1} h_\beta(d;\theta)
	\quad \text{and} \quad
	\epsb(g) \le \min_{\|d\|=1} h_\beta(d;\theta),
\end{equation}
while $\Qb$ and $\epsb$ admit no general ordering between themselves. The minimum
over directions pulls $\Qb$ below $\epsb$, and Jensen's inequality along any
single direction pulls it above.
\end{proposition}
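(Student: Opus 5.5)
The plan is to prove the two inequalities in \eqref{eq:ordering} separately and then exhibit two small configurations showing that neither order between $\Qb$ and $\epsb$ holds in general.

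For the first inequality, fix any unit direction $d$. For every friction realization $\mu$, the Ferrari-Canny margin is the minimum of the support function over directions, so $\eps(g,\mu) = \min_{\|d'\|=1} h_{\mathcal{W}(\mu)}(d') \le h_{\mathcal{W}(\mu)}(d)$ pointwise in $\mu$. For the signed version with the origin outside the body, I would use that the signed distance is still bounded by the support value in each direction. The lower-tail CVaR is monotone: $Z_1 \le Z_2$ almost surely implies $F_{Z_1}^{-1} \le F_{Z_2}^{-1}$, and integrating in \eqref{eq:cvardef} gives $\CVaR(Z_1)\le\CVaR(Z_2)$. Therefore $\Qb(\theta) \le \CVaR(h_{\mathcal{W}(\mu)}(d)) = h_\beta(d;\theta)$. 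Minimizing over $d$ gives the first bound.

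For the second inequality, \eqref{eq:rseps} gives $\epsb(g) = \min_{d} h_{\mathcal{W}^{(\beta)}}(d)$. Proposition~\ref{prop:radjbody} gives $\mathcal{W}^{(\beta)} \subseteq \bar{\mathcal{W}}^{(\beta)}$, and hence $h_{\mathcal{W}^{(\beta)}}(d) \le h_\beta(d;\theta)$ for every $d$. Taking the minimum over $d$ on both sides gives $\epsb(g) \le \min_d h_\beta(d;\theta)$. The inclusion step inside Proposition~\ref{prop:radjbody} is itself the Jensen-type exchange of $\max_{ij}$ with the tail expectation.

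The substantive part is the "no general ordering" claim, which needs two counterexamples. The two mechanisms are as follows.
\begin{enumerate}
\item To get $\Qb > \epsb$, use the fact that $\mu \mapsto \eps(g,\mu)$ is concave on intervals where a single facet and direction are active. Since $\mathcal{W}(\mu) = \mathcal{W}_n + \mu\mathcal{W}_t$, the support in a fixed direction is affine there, and the minimum over directions of such functions is concave. Jensen's inequality on the tail, $\mathbb{E}[\eps(g,\mu)\mid \mu\le\VaR(\mu)] \le \eps(g, v_\beta)$, then gives $\Qb \le \epsb$. So the reverse strict inequality requires a convex kink: a grasp whose margin in $\mu$ is locally convex, for example because the minimizing direction switches or a facet of the hull appears. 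A two-point friction prior straddling that kink should give $\Qb > \epsb$.
\item To get $\Qb < \epsb$, take a grasp whose margin is strictly concave in $\mu$ (the Jensen direction) and a spread-out tail, such as a two-point tail.
\end{enumerate}

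I expect constructing the convex-kink example to be the main obstacle. A natural first attempt is a planar two- or three-contact grasp with explicit generators, for which $\eps(g,\mu)$ can be computed in closed form as a piecewise expression in $\mu$. I would choose contact geometry so that the binding facet switches at some $\mu_0$, and then evaluate both margins numerically for a prior with atoms on either side of $\mu_0$. If the signed-distance convention makes $\eps$ convex on the negative branch, with the origin outside for small $\mu$, that branch is the easiest place to realize $\Qb > \epsb$.
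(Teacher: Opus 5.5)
Your proof of the two inequalities in \eqref{eq:ordering} is exactly the paper's. For the first you use the pointwise bound $\eps(g,\mu)\le h_{\mathcal{W}(\mu)}(d)$, monotonicity of $\CVaR$, and a minimum over $d$. For the second you use the per-direction bound $h_{\mathcal{W}^{(\beta)}}(d)\le h_\beta(d;\theta)$ from Proposition~\ref{prop:radjbody}, then a minimum over $d$. The paper's proof stops there and supports the ``no general ordering'' clause only with the heuristic sentence in the statement. Your reduction of $\Qb$ versus $\epsb$ to the sign of a Jensen gap of $\mu\mapsto\eps(g,\mu)$ over the adverse tail is therefore the right way to make that clause precise. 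It is valid because nesting makes $\eps(g,\cdot)$ continuous and non-decreasing, so $\Qb=\frac{1}{1-\beta}\int_0^{1-\beta}\eps(g,F_\mu^{-1}(u))\,du$, while $v_\beta$ is the same average of $F_\mu^{-1}$. You should say this explicitly, and with two-point priors work with this quantile form rather than conditioning on $\{\mu\le\VaR(\mu)\}$.

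The genuine gap is in how you plan to realize the two signs. The identity $\mathcal{W}(\mu)=\mathcal{W}_n+\mu\mathcal{W}_t$ does not hold for $\mathcal{W}(\mu)=\operatorname{conv}\{w_{ij}(\mu)\}$. Only $\mathcal{W}(\mu)\subseteq\mathcal{W}_n+\mu\mathcal{W}_t$ holds, since the Minkowski sum pairs every normal wrench with every tangential one. If the identity held, $h_{\mathcal{W}(\mu)}(d)=h_{\mathcal{W}_n}(d)+\mu\,h_{\mathcal{W}_t}(d)$ would be affine in $\mu$ for every $d$ on all of $[0,\infty)$. Then $\eps(g,\cdot)$ would be concave everywhere, and your own Jensen step would give $\Qb\le\epsb$ for every grasp and prior. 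That is a general ordering, which refutes the clause, and the convex kink your first mechanism needs could not exist.

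That mechanism also names the wrong cause. A switch of the minimizing direction is a minimum of two branches and produces a concave kink, which serves your second mechanism, not the first. Convexity comes instead from the fact that, for fixed $d$, $\mu\mapsto\max_{ij}d^\top w_{ij}(\mu)$ is a maximum of affine functions, with a convex kink wherever a different generator becomes the maximizer. When that happens in the binding direction, $\eps(g,\cdot)$ inherits the kink. Even where the binding facet stays fixed, its vertices move affinely and the facet tilts, so the distance to it need not be concave either.

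A one-dimensional toy makes this concrete. Generators $1$, $-1$, and the opposing pair $\tfrac12\pm3\mu$ give $\eps(g,\mu)=\max(1,3\mu-\tfrac12)$, whereas the Minkowski body gives $1+3\mu$. A tail with equal mass at $\mu=0.3$ and $\mu=0.7$ then yields $\Qb=1.3>1=\epsb$.

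Finally, neither counterexample is actually built. A planar grasp also lives in a three-dimensional wrench space, so establishing the clause for the paper's six-dimensional model still requires explicit generator configurations there. Those configurations should be designed around this vertex-switch mechanism rather than the Minkowski decomposition.
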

\begin{proof}
For every $d$, $\eps(g,\mu) \le h_{\mathcal{W}(\mu)}(d)$ pointwise in $\mu$;
applying the monotone functional $\CVaR$ and then minimizing over $d$ yields the
first inequality. The second inequality restates
Proposition~\ref{prop:radjbody}. Per direction,
$h_{\mathcal{W}^{(\beta)}}(d) \le h_\beta(d;\theta)$, and minimizing both sides
over $d$ preserves the bound.
\end{proof}

\begin{corollary}[Closure Certificate for $\epsb$]\label{cor:epsb_certificate}
If $\epsb(g) > 0$, then $\Pr[\eps(g,\mu) > 0] \ge \beta$. The guarantee
mirrors Theorem~\ref{thm:certificate} but follows from friction monotonicity
rather than from the CVaR tail bound. The GWS nests in $\mu$, so
$\eps(g,\cdot)$ is non-decreasing, $\epsb > 0$ implies closure for every
$\mu \ge v_\beta$, and $v_\beta \le \VaR(\mu)$ gives
$\Pr[\mu \ge v_\beta] \ge \beta$.
\end{corollary}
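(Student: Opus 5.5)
The plan is to chain three monotonicity facts: one in the set, one in friction, and one in the quantile. The first step shows that the signed margin $\eps(g,\mu)$ is non-decreasing in $\mu$.

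\textbf{Step 1: monotonicity of $\eps(g,\cdot)$.} By \Cref{def:rsepsmet}, the signed margin at any friction value is $\eps(g,\mu)=\min_{\|d\|=1} h_{\mathcal{W}(\mu)}(d)$. This single formula covers both regimes. When the origin is interior it gives the inscribed-ball radius. When the origin lies outside the body it gives the negative of the origin-to-body distance, because the minimizing direction then points away from the body.

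The nesting property from \Cref{ssec:graspgeom} says $\mathcal{W}(\mu')\subseteq\mathcal{W}(\mu)$ for $0\le\mu'\le\mu$. Support functions are monotone under set inclusion, so $h_{\mathcal{W}(\mu')}(d)\le h_{\mathcal{W}(\mu)}(d)$ for every $d$. Taking the minimum over $d$ preserves this, so $\eps(g,\mu')\le\eps(g,\mu)$.

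I expect this to be the main point needing care. Using the support-function formula rather than a geometric ``inscribed radius'' argument is what makes the negative (non-closed) case automatic.

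\textbf{Step 2: the event inclusion.} Since $\epsb(g)=\eps(g,v_\beta)>0$, Step 1 gives $\eps(g,\mu)\ge\eps(g,v_\beta)>0$ for every realization $\mu\ge v_\beta$. Hence
\[
\{\mu\ge v_\beta\}\subseteq\{\eps(g,\mu)>0\},
\qquad\text{so}\qquad
\Pr[\eps(g,\mu)>0]\ge\Pr[\mu\ge v_\beta].
\]

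\textbf{Step 3: the quantile bound.} By the inequality $\CVaR(Z)\le\VaR(Z)$ stated after \eqref{eq:cvardef}, we have $v_\beta=\CVaR(\mu)\le q:=\VaR(\mu)=F_\mu^{-1}(1-\beta)$. Therefore $\Pr[\mu\ge v_\beta]\ge\Pr[\mu\ge q]$.

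It remains to show $\Pr[\mu\ge q]\ge\beta$. Take $F_\mu^{-1}$ as the left-continuous generalized inverse, $q=\inf\{x:F_\mu(x)\ge 1-\beta\}$. Then every $x<q$ satisfies $F_\mu(x)<1-\beta$. Letting $x\uparrow q$ gives $\Pr[\mu<q]=\lim_{x\uparrow q}F_\mu(x)\le 1-\beta$, and hence $\Pr[\mu\ge q]\ge\beta$.

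This quantile-convention check is the second delicate point, because atoms in $p_\mu$ are allowed. Working with the non-strict event $\{\mu\ge q\}$ is what keeps the argument valid when the distribution has atoms.

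Combining Steps 2 and 3 gives $\Pr[\eps(g,\mu)>0]\ge\beta$.
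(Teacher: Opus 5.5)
Your proof is correct and follows the paper's own argument. The paper uses the same three steps. Nesting of the GWS in $\mu$ makes $\eps(g,\cdot)$ non-decreasing, so $\epsb>0$ forces closure on $\{\mu\ge v_\beta\}$, and $v_\beta\le\VaR(\mu)$ gives $\Pr[\mu\ge v_\beta]\ge\beta$. Your support-function treatment of the signed margin and your generalized-inverse check for atoms in $p_\mu$ make explicit details the paper leaves implicit.
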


\section{Synthesis Algorithm}\label{sec:synthesis}
Theorem~\ref{thm:diff} provides a basis for gradient-based grasp
optimization. In the present work, however, we approximate
the maximizer of Problem~\ref{prob:rsrgs} with a sample, certify, and score
strategy layered on an established force-closure synthesizer.
Algorithm~\ref{alg:synth} summarizes the procedure, and we describe its stages
next.

\begin{algorithm}[t]
	\caption{Risk-Sensitive Grasp Synthesis}\label{alg:synth}
	\DontPrintSemicolon
	\KwIn{Object model, hand model, friction prior $p_\mu$, confidence level $\beta$,
		seed budget $S$}
	\KwOut{Scored grasp pool $\mathcal{G}_{\mathrm{pool}}$, best grasp $g^*$}
	$\mathcal{G}_{\mathrm{pool}} \leftarrow \emptyset$;\quad
	$v_\beta \leftarrow \CVaR(\mu)$, $\mu \sim p_\mu$\;
	\For{$s = 1, \ldots, S$}{
		Sample a seeded palm pose and finger preshape\;
		Refine the sample with the force-closure synthesis NLP\;
		\If{the refined grasp certifies force closure via \eqref{eq:minweight}, $\ell^* > 0$}{
			Add $g_s$, with contact set $\mathcal{C}_s$ and primitive wrenches,
			to $\mathcal{G}_{\mathrm{pool}}$\;
		}
	}
	\ForEach{$g_s \in \mathcal{G}_{\mathrm{pool}}$}{
		Assemble $\mathcal{W}^{(\beta)} = \mathcal{W}(v_\beta)$ from the stored
		primitive wrenches\;
		$\epsb(g_s) \leftarrow$ signed inscribed-ball radius of
		$\mathcal{W}^{(\beta)}$\;
	}
	$g^* \leftarrow \arg\max_{g_s \in \mathcal{G}_{\mathrm{pool}}} \epsb(g_s)$\;
	\KwRet{$(\mathcal{G}_{\mathrm{pool}},\, g^*)$}
\end{algorithm}

The procedure begins with a sampling stage that draws seeded palm poses and finger preshapes around the object
and refines each sample through the FRoGGeR min-weight nonlinear
program~\cite{li_frogger_2023}, which certifies force closure ($\ell^* > 0$) at
convergence and serializes the contact set and primitive wrenches of every
accepted grasp. Next, the scoring stage assembles the risk-adjusted body
$\mathcal{W}^{(\beta)}$ once per grasp at the risk-adjusted friction $v_\beta$
and computes $\epsb$ as its signed inscribed radius (\Cref{def:rsepsmet}), so the
risk adjustment scores an entire pool with one convex-hull computation per grasp
and no simulation. The selection stage keeps the scored pool for the aggregate
studies and returns the highest-ranked grasp when the task requires one execution,
as in the per-object analysis of \Cref{tab:main_results}. Because
Algorithm~\ref{alg:synth} scores a sampled pool rather than computing gradients (e.g., $\nabla_\theta \Qb$),
it does not exploit the differentiability established in Theorem~\ref{thm:diff}.
In the synthesis study of \Cref{ssec:synth_objective}, we take one step in the differentiability
direction by optimizing the differentiable risk-adjusted min-weight objective inside the synthesizer itself. Nevertheless, we leave a gradient-based synthesis extension of our pipeline to future work.

\section{Experiments}\label{sec:exp}
The protocol below evaluates our risk-sensitive metrics on stored grasp datasets across three robotic hands.

\subsection{Robotic Hand Platforms and Grasp Datasets}\label{ssec:platform}
We evaluate our metrics on grasp datasets for several anthropomorphic hands, the LEAP~\cite{shaw_leap_2023}
and Allegro hands for the aggregate and synthesis studies, the RealHand L6 for the
per-object analysis, and the Shadow Hand for the cross-dataset study
(\Cref{ssec:crosscorpus}). Grasps come from a Drake-backed pipeline derived from
FRoGGeR~\cite{li_frogger_2023}, which generates candidate contacts, filters them by
opposition geometry, and certifies force closure analytically through the min-weight
linear program of \eqref{eq:minweight} at synthesis time. Each accepted grasp
serializes its grasp map and contact set, and our metrics run
analytically from the stored logs without re-running the simulator. The dynamic
six-axis shake~\cite{li2023gendexgrasp, zhong_gagrasp_2025} in \Cref{ssec:predictive} is a separate predictive-validity study on
these stored grasps, not a synthesis-time verification. We run headless for synthesis and
use Drake's offscreen rendering for visualization. The same pipeline also retargets
to other arm-hand systems. The appendix documents a ZArm-mounted LEAP variant with
arm-reachability and fixture-collision gates, together with a qualitative gallery of representative certified-robust grasps
(\Cref{fig:leap_gallery}).

\subsection{Object Set}\label{ssec:objects}
Each study uses a different object set. The aggregate study
(\Cref{ssec:divergence}) evaluates a combined set of 1{,}599 force-closed grasps, 800
on LEAP and 799 on Allegro, over eight objects, four geometric primitives (sphere,
cube, box, cylinder) and four YCB objects~\cite{calli2015ycb} (tennis ball, master chef can, potted meat
can, mustard bottle). The broader dexterous grasping literature draws on a wider range of standardized
object sets and benchmarks beyond YCB, including the Columbia Grasp
Database~\cite{goldfeder2009columbia}, the GraspIt! simulator's object
library~\cite{miller2004graspit}, the Yale-CMU-Berkeley manipulation
benchmark~\cite{calli2017yale}, and hand-pose-annotated datasets such as
DexYCB~\cite{chao2021dexycb} and GRAB~\cite{taheri2020grab}, together with the GRASP
taxonomy of human grasp types~\cite{feix2016grasp} and the MANO hand
model~\cite{romero2017embodied} used to parameterize several of these resources.
The predictive-validity study (\Cref{ssec:predictive}) drops
the sphere, since the rotational symmetry of a sphere collapses the six shake
directions to a single representative outcome and stops the test from discriminating
on orientation, and reports 641 LEAP grasps over the remaining seven objects. The
per-object analysis (\Cref{tab:main_results}) reports one representative RealHand L6
grasp for each of seven objects (cube, box, cylinder, potted meat can, mustard
bottle, tomato soup can, tennis ball), a set that swaps the tomato soup can in for
the master chef can to keep every object representable in the L6 aperture. The
primitives provide controlled baselines with known symmetry, while the YCB objects
introduce realistic surface geometry and scale variation.

\subsection{Grasp Synthesis}\label{ssec:synth_protocol}
For each object we draw 100 seeded synthesis attempts. Each attempt samples a
palm pose and finger preshape around the object, refines it through the
FRoGGeR nonlinear program, and keeps the result only when the solver converges
to a force-closed grasp ($\ell^* > 0$ via \eqref{eq:minweight}). The resulting protocol
yields the 800 LEAP and 799 Allegro grasps of the aggregate study and the
RealHand L6 pool behind \Cref{tab:main_results}. Scoring follows the analytic
protocol of \Cref{ssec:reeval}, one risk-adjusted hull per grasp, with no Monte
Carlo sampling at synthesis time.

\subsection{Friction Priors}\label{ssec:friction}
We model friction uncertainty with two priors. The nominal prior is a tight Gaussian,
$\mu \sim \mathcal{N}(0.7, 0.1^2)$, centered at a typical rubber-on-plastic coefficient
with mild spread. The adverse prior is a bimodal mixture,
$0.5\,\mathcal{U}(0.7, 1.0) + 0.5\,\mathcal{U}(0.1, 0.3)$, that places half its mass in
a slippery low-friction tail to model surfaces with mixed-friction populations. The
nominal prior probes the regime where the risk adjustment is mild, and the adverse
prior stresses friction sensitivity.

\subsection{Analytic Evaluation Protocol}\label{ssec:reeval}
We evaluate every grasp in the stored dataset analytically, without Monte Carlo
sampling. By Lemma~\ref{lem:linearscaling} the primitive wrenches are affine in the
friction coefficient, so the risk-adjusted quality $\epsb$ of \Cref{def:rsepsmet}
reduces to one inscribed-radius computation per grasp. We assemble the polytope
$\mathcal{W}^{(\beta)} = \mathcal{W}(v_\beta)$ at the risk-adjusted friction
$v_\beta = \CVaR(\mu)$ of the prior, estimated once per prior from
$2{\times}10^5$ samples, build its convex hull, and take the minimum
origin-to-facet distance. We report $\epsb$
at confidence $\beta = 0.9$ under each prior, taking $\epsb$ as a continuous signed
margin, negative when the risk tail violates closure, so it ranks grasps without a clamp.
The sign of $\epsb$ is the closure certificate of
Corollary~\ref{cor:epsb_certificate}.

\subsection{Baselines}\label{ssec:baselines}
We compare against two baselines. The first is the nominal Ferrari-Canny epsilon
$\eps_\text{nom}$, which evaluates the GWS inscribed-ball radius at the fixed nominal
friction $\mu=0.7$. The aggregate and per-object analyses evaluate
$\eps_\text{nom}$ with each cone edge normalized to a unit contact force, the
convention of~\cite{ferrariPlanningOptimalGrasps1992}, while the predictive and
cross-dataset studies evaluate it on the same unit-normal-force cone that defines
$\epsb$. At a fixed friction the two conventions differ by the constant factor
$(1+\mu_\text{nom}^2)^{-1/2} \approx 0.82$, a global rescaling that leaves every
sign, ranking, correlation, and certified fraction unchanged. The second is the min-weight metric $\ell^*$ from
FRoGGeR~\cite{li_frogger_2023}, computed by solving the linear program
\begin{equation}\label{eq:minweight}
	\max_{\alpha, \ell}\; \ell \quad \text{s.t.}\; W\alpha = 0,\;
	\mathbf{1}^\top{\alpha} = 1,\; \alpha_i \ge \ell \;\forall\, i,
\end{equation}
where $W$ is the $6 \times m$ matrix of primitive wrenches, \({\alpha}=[\alpha_1, \alpha_2, \ldots, \alpha_m]^\top \in \mathbb{R}^m\) is a vector of non-negative weights, one per GWS friction-cone generator, and $m = n_c n_s$ for $n_c$ contacts and $n_s$ cone edges. We report the raw $\ell^*$, which Li et
al.~\cite{li_frogger_2023} normalize to $m \cdot \ell^*$ as a practical quality
indicator for precision grasps.

\subsection{Simulated Execution Studies}\label{ssec:simexec}
Our Drake simulation testbed evaluates every certified grasp using the protocol of
\Cref{apx:drkexec}. For each grasp, we establish contact through a small pad welded
at each synthesized fingertip, close the hand to the certified configuration, and
apply the gravity-off six-axis shake. For a per-object subset, we also perform the
gravity-on lift with the graded lateral pull described in
\Cref{sssec:pickcompanion}. We run two
studies on the execution testbed. The first establishes that the certified configurations
execute, in the sense that the closure acquires the object and the acquired object
stays held at the nominal friction. The second tests whether our risk metric \(\epsb\)
predicts object retention under execution-time friction uncertainty, in particular
under slips arising from low-friction surfaces.

First, we evaluate grasp executability. We execute the seven-object certified pool of
\Cref{ssec:predictive} in simulation. For each grasp, we establish contact through
the certified pads, close the hand to the synthesized configuration, and apply the
six-axis shake at the nominal friction coefficient \(\mu = 0.7\). Note that
the present study measures whether the certified configurations acquire and hold the objects
but does not isolate the friction robustness our metric targets, since every grasp
executes at the friction it was certified against. \Cref{tab:sim_exec} reports the
per-object outcomes. Contact transfer is the dominant attrition stage, 302 of the 641
replayed grasps establish contact, while execution after transfer is reliable, 279 of
the 302 established grasps hold through the nominal shake and all 51 established
grasps in the lift subset hold through the nominal lift.

Second, we test our metric's claim through a controlled friction contrast on the same
executed pool. Rather than synthesizing one grasp per objective, we retain the
established grasps whose nominal margin \(\eps_\text{nom}\) sits at or above its
per-object median, so object identity does not drive the split, and within this
high-nominal pool the per-object median of \(\epsb\) separates a high-\(\epsb\) group, high
nominal margin and high risk margin, from a low-\(\epsb\) group, high nominal margin and
low risk margin. The two groups are the executed analog of the nominal-versus-risk
contrast, both look equally certifiable to \(\eps_\text{nom}\), and only \(\epsb\)
distinguishes them before execution. \Cref{tab:hw_2x2} reports the retention of each
group across the executed friction sweep \(\mu \in \{0.2, 0.3, 0.4, 0.7\}\). At the
nominal friction the arms hold at matching rates, 0.93 against 0.96, which is the
regime where the nominal margin certifies both. As the friction drops the arms
separate, 0.69 against 0.50 at \(\mu = 0.4\), 0.57 against 0.36 at \(\mu = 0.3\), and
0.41 against 0.32 at \(\mu = 0.2\), so among grasps the nominal margin rates equally,
\(\epsb\) identifies before execution which ones fail once the friction assumption
breaks.

\begin{table}[t]
	\centering
	\caption{Simulated Grasp Executability per Object. Attempted: certified
		grasps evaluated in simulation; Grip: the certified pads establish
		contact; Shake Hold: the grasp retains the object during the
		gravity-off six-axis shake at \(\mu = 0.7\); Lift Grip and Lift Held:
		the hand establishes contact and retains the object at \(\mu = 0.7\)
		in the 15-grasp-per-object gravity-on lift subset. MCC: Master Chef
		Can. Salmon marks the lowest grip rate and mint the highest; bold
		marks Lift Held, which equals Lift Grip in every row.}\label{tab:sim_exec}
	\small
	\setlength{\tabcolsep}{4.5pt}
	\begin{tabular}{@{}p{1.5cm} c c c c c@{}}
		\toprule
		Object & Attempted & Grip & Shake Hold & Lift Grip & Lift Held \\
		\midrule
		Cube               & 100 & 36 & 31 & 4  & \textbf{4} \\
		Box      & 98  & 45 & 43 & 7  & \textbf{7} \\
		Cylinder & 100 & 57 & 52 & 10 & \textbf{10} \\
		MCC    & 96  & 52 & 49 & 7  & \textbf{7} \\
		Mustard     & 82  & 33 & 27 & 5  & \textbf{5} \\
		Meat Can    & 98  & \cellcolor{losssalmon}28 & 26 & 4  & \textbf{4} \\
		Tennis Ball        & 67  & \cellcolor{oursmint}51 & 51 & 14 & \textbf{14} \\
		\bottomrule
	\end{tabular}
\end{table}

\begin{table}[t]
	\centering
	\caption{Friction-Contrast Retention on Executed Grasps. Retention rate of
		the two high-\(\eps_\text{nom}\) groups, split at the per-object
		\(\epsb\) median into a high-\(\epsb\) and a low-\(\epsb\) group, at each executed
		friction of the shake sweep. The two groups tie at the nominal friction and
		separate as the friction drops. Mint marks the risk-favored group, salmon
		the group the risk margin excludes.}\label{tab:hw_2x2}
	\small
	\setlength{\tabcolsep}{4pt}
	\begin{tabular}{@{}l c c c c c@{}}
		\toprule
		Group & \(n\) & \(\mu = 0.2\) & \(\mu = 0.3\) & \(\mu = 0.4\) & \(\mu = 0.7\) \\
		\midrule
		\rowcolor{oursmint}
		High \(\eps_\text{nom}\), high \(\epsb\) & 109 & 0.41 & 0.57 & 0.69 & 0.93 \\
		\rowcolor{losssalmon}
		High \(\eps_\text{nom}\), low \(\epsb\)  & 44  & 0.32 & 0.36 & 0.50 & 0.96 \\
		\bottomrule
	\end{tabular}
\end{table}
\section{Results and Discussion}\label{sec:results}
The aggregate analysis spans a combined dataset of
1{,}599 force-closed grasps, 800 on the LEAP hand and 799 on the Allegro hand over the eight core test objects. Across the grasp data, we compare our risk metric \(\epsb\) to the nominal Ferrari-Canny margin \(\eps_\text{nom}\) and to the FRoGGeR min-weight baseline
\(\ell^*\) under two friction priors, a tight Gaussian \(\mathcal{N}(0.7, 0.1^2)\)
and an adverse mixture with equal weights \(0.5\,\mathcal{U}(0.7, 1.0) + 0.5\,\mathcal{U}(0.1, 0.3)\),
at confidence \(\beta = 0.9\). Every number traces to a stored evaluation log that
records the convention, the priors with their risk-adjusted frictions, and the
method.

\begin{table}[t]
	\centering
	\caption{Aggregate Metric Relationships (Pearson \(r\), 1{,}599 Grasps). The shaded rows show the adverse prior, where our risk metric differs from both fixed-friction baselines.}\label{tab:aggregate}
	\small
	\setlength{\tabcolsep}{5pt}
	\begin{tabular}{@{}l r r r@{}}
		\toprule
		Metric Pair & LEAP & Allegro & Combined \\
		\midrule
		\(\eps_\text{nom}\) vs \(\ell^*\)                       & 0.62 & 0.62 & 0.60 \\
		\(\epsb\) (nominal prior) vs \(\eps_\text{nom}\)        & 0.95 & 0.95 & 0.95 \\
		\(\epsb\) (nominal prior) vs \(\ell^*\)                 & 0.72 & 0.71 & 0.69 \\
		\rowcolor{oursmint}
		\(\epsb\) (adverse prior) vs \(\eps_\text{nom}\)        & 0.38 & 0.30 & \textbf{0.32} \\
		\rowcolor{oursmint}
		\(\epsb\) (adverse prior) vs \(\ell^*\)                 & 0.57 & 0.52 & 0.52 \\
		\bottomrule
	\end{tabular}
\end{table}

\begin{figure}[t]
	\centering
	\includegraphics[width=\linewidth]{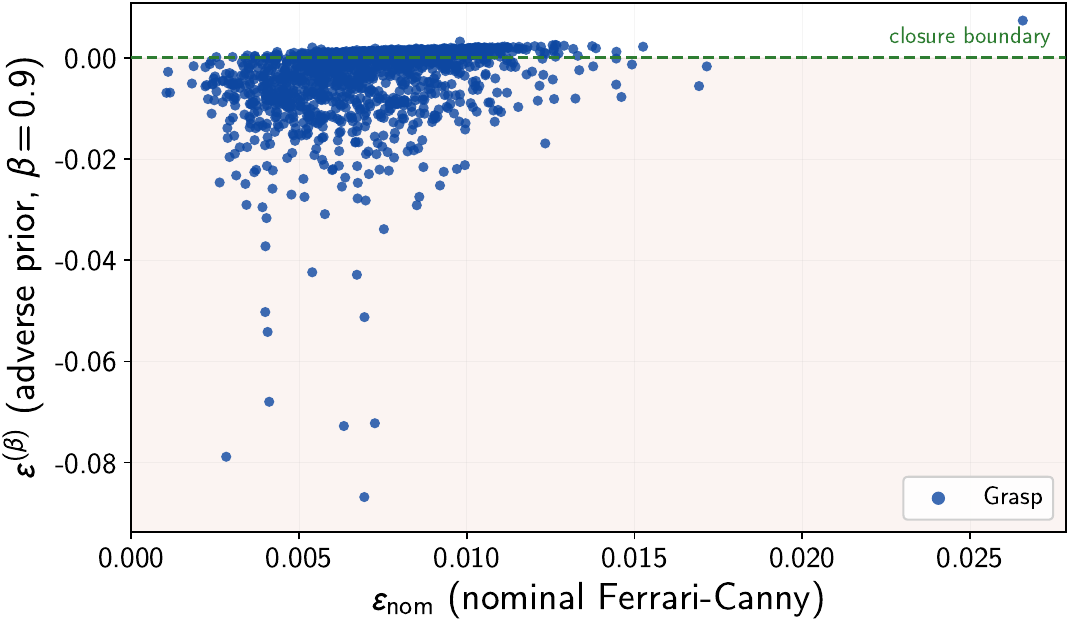}
	\caption{\textbf{Aggregate Divergence Under the Adverse Prior.} Risk metric
		\(\epsb\) under the adverse friction prior against the nominal Ferrari-Canny
		\(\eps_\text{nom}\), one point per grasp over the combined 1{,}599-grasp dataset.
		Every grasp is force-closed at nominal friction, so all points sit at positive
		\(\eps_\text{nom}\), yet the shaded population falls below the closure boundary
		in the adverse tail. The nominal margin overrates these
		grasps.}\label{fig:divergence}
\end{figure}

\subsection{Aggregate Metric Divergence}\label{ssec:divergence}
Under a tight Gaussian friction prior our risk metric \(\epsb\) tracks the nominal
Ferrari-Canny margin, with a combined Pearson correlation of 0.95
(\Cref{tab:aggregate}). The risk adjustment at the nominal prior is mild, so the two
metrics rank grasps almost identically. Under the adverse prior the picture
changes. The correlation of \(\epsb\) with \(\eps_\text{nom}\) falls to 0.32 and
with \(\ell^*\) to 0.52, so our risk metric separates from both friction-unaware
baselines (\Cref{fig:divergence}). \Cref{fig:corrdegrade} traces the resulting degradation
as the confidence level \(\beta\) rises, the nominal-prior correlation holding near
0.95 while the adverse-prior correlation settles near 0.30.

Specifically, the divergence is concentrated in a friction-sensitive majority. Of the 1{,}599
grasps that are force-closed at nominal friction, 849 (53\%) lose closure in
the adverse tail, where \(\epsb\) becomes nonpositive (57\% on LEAP, 49\%
on Allegro). Object geometry, rather than a synthesis artifact, explains the pattern.
Rounded objects remain robust: the sphere at 5\% and the tennis ball at 13\%, while
irregular objects are the most friction-sensitive, the mustard bottle at 86\%
and the potted meat can at 80\%. The nominal metric \(\eps_\text{nom}\)
assigns high scores to these friction-sensitive grasps, whereas \(\epsb\)
identifies them. The risk adjustment therefore measures a dimension of
robustness that fixed-friction metrics do not encode.

\begin{figure}[t]
	\centering
	\includegraphics[width=\linewidth]{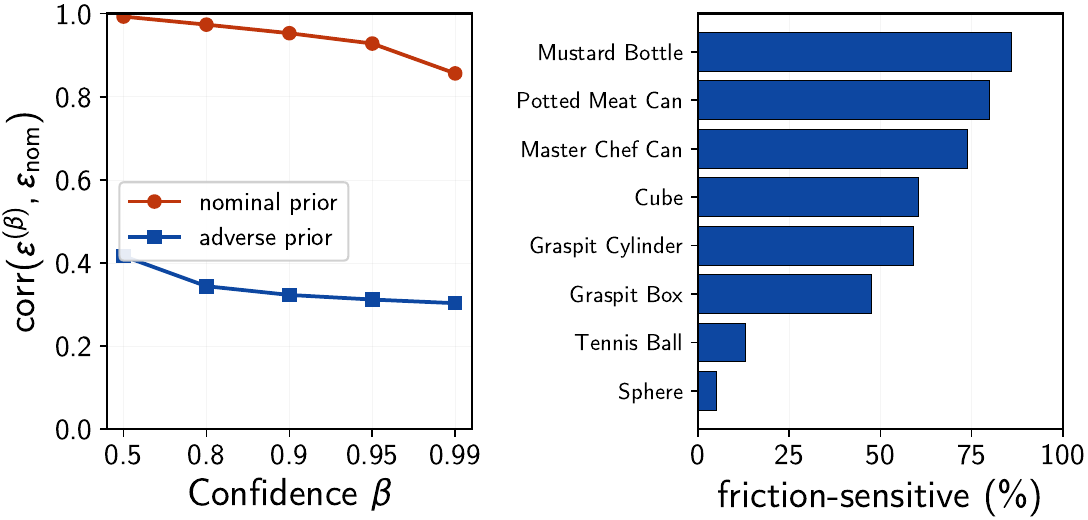}
	\caption{\textbf{Correlation Degradation and Object Stratification.} \emph{Left}: the
		Pearson correlation of \(\epsb\) with \(\eps_\text{nom}\) as the confidence
		\(\beta\) rises, holding near 0.95 under the nominal prior and falling toward
		0.30 under the adverse prior. \emph{Right}: the per-object fraction of nominally
		force-closed grasps that lose closure in the adverse tail, from the rounded
		sphere to the irregular mustard bottle.}\label{fig:corrdegrade}
\end{figure}

\subsection{Synthesis Under the Risk Objective}\label{ssec:synth_objective}
\Cref{tab:synthesis} and \Cref{fig:synthesis} demonstrate the risk adjustment's use
in a synthesis objective, beyond evaluation. We run FRoGGeR synthesis under the
differentiable risk objective, the min-weight program of \eqref{eq:minweight}
solved on the risk-adjusted body \(\mathcal{W}^{(\beta)}\) (the \(\ell^{*(\beta)}\)
member of \Cref{tab:qualfuncs}), and under the nominal min-weight objective at
matched budget over the eight core objects on both hands, then score every
synthesized grasp with our metric \(\epsb\) under the adverse prior. At matched synthesis time the
risk objective lowers the friction-sensitive fraction from 67.7 to 57.6\%
combined, and on LEAP alone from 87.3 to 75.4\%, while raising the median
adverse-tail margin from \(-0.00322\) to \(-0.00119\). The differentiable objective
we expose through the FRoGGeR pipeline uses a Gaussian friction model, so it
optimizes a mild Gaussian-prior risk while the evaluation applies the harsher
adverse prior. The improvement is therefore best understood as a transfer result. Grasps tuned
for a mild risk withstand the harsher adverse tail more often than the min-weight
grasps the FRoGGeR baseline produces.

\begin{table}[t]
	\centering
	\caption{Synthesis Under the Risk Objective Versus the Min-Weight Objective (Combined LEAP and Allegro, Adverse Prior, \(\beta = 0.9\)). The shaded row is the risk objective.}\label{tab:synthesis}
	\small
	\setlength{\tabcolsep}{3pt}
	\begin{tabular}{@{}l c c c@{}}
		\toprule
		Synth. Obj. & Friction-Sens.\ \(\downarrow\) & Median \(\epsb\) \(\uparrow\) & Solve (s) \(\downarrow\) \\
		\midrule
		\rowcolor{oursmint}
		Risk \(\ell^{*(\beta)}\)  & \textbf{72/125 (57.6\%)} & \(\mathbf{-0.00119}\) & 19.3 \\
		Min-weight \(\ell^*\) & 86/127 (67.7\%) & \(-0.00322\) & \textbf{14.9} \\
		\bottomrule
	\end{tabular}
\end{table}

\begin{figure}[t]
	\centering
	\includegraphics[width=\linewidth]{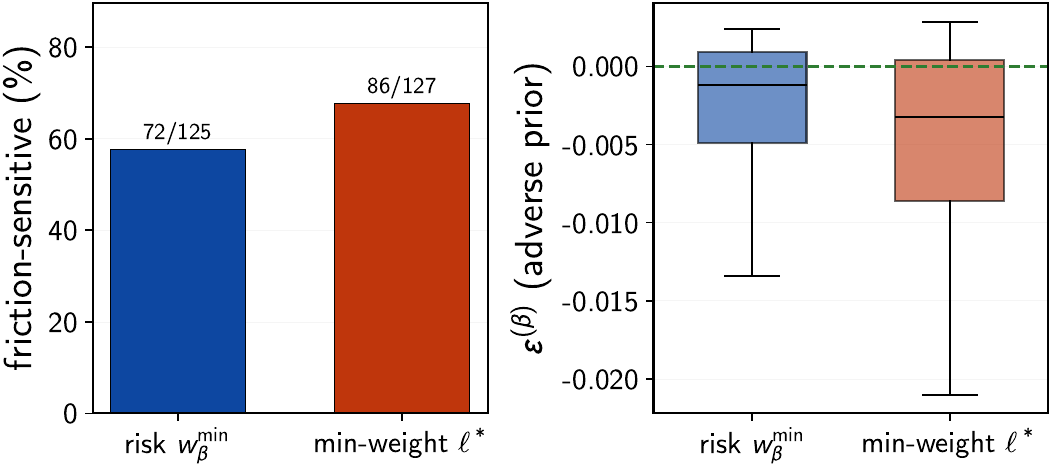}
	\caption{\textbf{Synthesis Under the Risk Objective.} \emph{Left}: the fraction of
		synthesized grasps that are friction-sensitive under the adverse prior, lower
		for the risk objective than for the min-weight objective at matched budget.
		\emph{Right}: the distribution of \(\epsb\) under the adverse prior, with the risk
		objective shifted toward the closure boundary.}\label{fig:synthesis}
\end{figure}

Next, \Cref{tab:main_results} reports a representative RealHand L6 grasp for each of seven
objects under the analytic protocol of \Cref{ssec:reeval}, the force-closed grasp that
maximizes \(\epsb\) under the nominal prior at \(\beta = 0.9\). For each grasp it lists
the nominal Ferrari-Canny margin \(\eps_\text{nom}\), the FRoGGeR min-weight
\(\ell^*\), our risk metric \(\epsb\) under the nominal prior across confidence levels
and under the adverse prior at \(\beta = 0.9\), and the adverse-prior closure certificate
\(\beta_c\).

\begin{table*}[t]
	\centering
	\caption{Per-Object RealHand L6 Analysis (Analytic, Representative Grasp per Object). \(n_c\): contact count; \(\eps_\text{nom}\): Ferrari-Canny at \(\mu{=}0.7\); \(\ell^*\): FRoGGeR min-weight; \(\eps^{(\beta)}_{\mathcal{N}}\): risk metric under the nominal prior \(\mathcal{N}(0.7, 0.1^2)\) at confidence \(\beta\); \(\eps^{(0.9)}_{\text{adv}}\): risk metric under the adverse prior at \(\beta{=}0.9\); \(\beta_c\): largest adverse-prior confidence with \(\epsb > 0\). Salmon cells lose closure in the adverse tail (\(\eps^{(0.9)}_{\text{adv}} \le 0\)), mint cells retain it, and the mustard bottle pairs the weakest certificate with the deepest loss in the adverse tail.}\label{tab:main_results}
	\small
	\setlength{\tabcolsep}{4.5pt}
	\begin{tabular}{@{}l r c c c c c c c c@{}}
		\toprule
		Object & \(n_c\) & \(\eps_\text{nom}\) & \(\ell^*\) & \(\eps^{(0.5)}_{\mathcal{N}}\) & \(\eps^{(0.8)}_{\mathcal{N}}\) & \(\eps^{(0.9)}_{\mathcal{N}}\) & \(\eps^{(0.95)}_{\mathcal{N}}\) & \(\eps^{(0.9)}_{\text{adv}}\) & \(\beta_c\)\\
		\midrule
		Cube            & 5 & 0.0075 & 0.0211 & 0.0083 & 0.0076 & 0.0072 & 0.0067 & \cellcolor{losssalmon}\(-0.0003\) & 0.85 \\
		Box             & 5 & 0.0099 & 0.0268 & 0.0103 & 0.0090 & 0.0082 & 0.0075 & \cellcolor{losssalmon}\(-0.0001\) & 0.87 \\
		Cylinder        & 5 & 0.0078 & 0.0266 & 0.0085 & 0.0078 & 0.0073 & 0.0069 & \cellcolor{oursmint}0.0010      & 1.00 \\
		Potted Meat Can & 5 & 0.0089 & 0.0307 & 0.0100 & 0.0092 & 0.0085 & 0.0079 & \cellcolor{oursmint}0.0003      & 0.99 \\
		Mustard Bottle  & 5 & 0.0081 & 0.0259 & 0.0086 & 0.0077 & 0.0071 & 0.0065 & \cellcolor{losssalmon}\(-0.0036\) & \cellcolor{losssalmon}0.46 \\
		Tomato Soup Can & 5 & 0.0091 & 0.0328 & 0.0099 & 0.0087 & 0.0080 & 0.0075 & \cellcolor{oursmint}0.0003      & 0.97 \\
		Tennis Ball     & 5 & 0.0070 & 0.0294 & 0.0075 & 0.0066 & 0.0061 & 0.0057 & \cellcolor{oursmint}0.0010      & 1.00 \\
		\bottomrule
	\end{tabular}
\end{table*}

\begin{figure}[t]
	\centering
	\includegraphics[width=\linewidth]{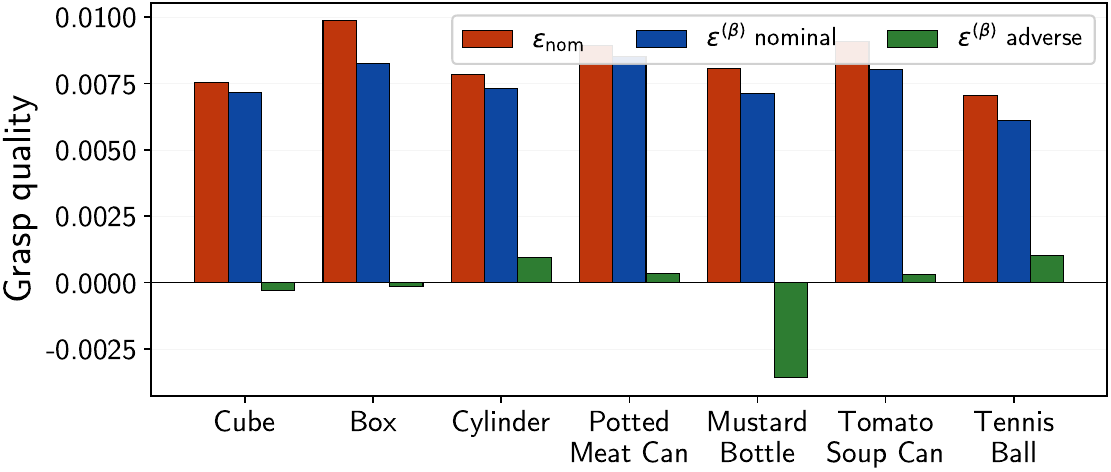}
	\caption{\textbf{Per-Object Quality Comparison.} Nominal Ferrari-Canny
		$\eps_\text{nom}$, our risk metric $\epsb$ under the nominal prior, and $\epsb$
		under the adverse prior at $\beta = 0.9$, for each RealHand L6 grasp. The nominal
		prior tracks the friction-unaware margin, while the adverse prior collapses toward
		and below the closure boundary, sharply for the mustard
		bottle.}\label{fig:quality_comparison}
\end{figure}

\subsection{Per-Object Analysis}\label{ssec:cvar_sensitivity}
The per-object data reproduce the aggregate divergence of \Cref{ssec:divergence} at
the level of single grasps and confirm the analytic properties of \Cref{sec:theory}.
Under the nominal prior our risk metric \(\epsb\) tracks the nominal Ferrari-Canny
margin, so both rate the seven grasps as comparable (\Cref{fig:quality_comparison}).
Under the adverse prior, \(\epsb\) separates them, falling to a small positive margin
for the rounded tennis ball and cylinder and below the closure boundary for the cube,
box, and mustard bottle, exposing the per-object friction sensitivity that the
friction-unaware margin cannot capture. \Cref{fig:grasp_gallery} shows twelve such
RealHand L6 grasps spanning household items, primitive shapes, and rounded objects,
each force-closed (\(\ell^* > 0\)) and satisfying the friction-tail requirement
\(\eps^{(0.99)} > 0\) (\(\epsb\) at \(\beta = 0.99\) under the nominal prior, \Cref{def:rsepsmet}).

\begin{figure}[t]
	\centering
	\includegraphics[width=.98\linewidth]{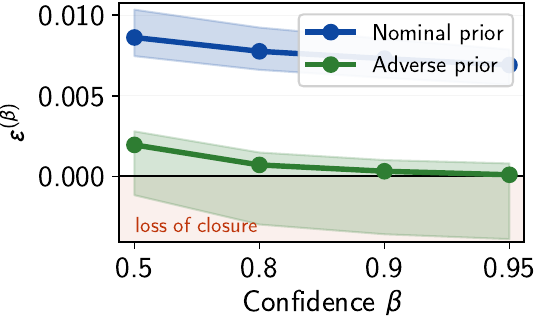}
	\caption{\textbf{Risk Sensitivity to the Confidence Level.} Our risk metric
		\(\epsb\) against confidence \(\beta\), summarized over the seven representative
		RealHand L6 grasps of \Cref{tab:main_results} as a median line with a
		min-to-max band. Under the nominal prior \(\epsb\) stays above the closure
		boundary. Under the adverse prior the band crosses into the loss-of-closure
		region as \(\beta\) rises. \(\epsb\) is non-increasing in \(\beta\)
		(the \(\epsb\) analog of Theorem~\ref{thm:monotone}, via
		\Cref{rem:epsmu_curve}).}\label{fig:beta_sensitivity}
\end{figure}

Empirically, the data confirm the two analytic guarantees of \Cref{sec:theory}. The margin
\(\epsb\) is non-increasing in the confidence \(\beta\), since it evaluates the
non-decreasing piecewise-linear quality curve of \Cref{rem:epsmu_curve} at a
risk-adjusted friction \(v_\beta\) that falls as \(\beta\) rises (the
\(\epsb\) analog of Theorem~\ref{thm:monotone}), and this holds for every grasp
across both priors in \Cref{tab:main_results} and
\Cref{fig:beta_sensitivity}. By Corollary~\ref{cor:epsb_certificate}, the sign of \(\epsb\)
is a probabilistic closure certificate. Under the nominal prior every selected grasp
is certified at all confidence levels, while under the adverse prior our certificate
\(\beta_c\) tightens with friction sensitivity, from \(\beta_c = 1.0\) for the cylinder
and tennis ball down to \(\beta_c = 0.46\) for the mustard bottle. The FRoGGeR
min-weight \(\ell^*\) varies little across the seven grasps, consistent with its role
as a friction-unaware force-closure margin that our risk metric complements rather than
replaces.

\begin{remark}[Piecewise-Linear Quality Curve]\label{rem:epsmu_curve}
Because Lemma~\ref{lem:linearscaling} establishes that each primitive wrench is affine in \(\mu\), the support function \(h_{\mathcal{W}(\mu)}(d)\) is piecewise linear and convex in \(\mu\) for every fixed direction \(d\), a maximum of affine vertex supports.  The Ferrari-Canny margin \(\eps(g,\mu) = \min_{\|d\|=1} h_{\mathcal{W}(\mu)}(d)\) is therefore piecewise linear in \(\mu\).  For any grasp there exists the critical friction coefficient \(\mu^{*} = \inf\{\mu : \eps(g,\mu) > 0\}\) of the Necessity remark below which force closure is lost. Plotting \(\eps(g,\mu)\) against \(\mu\) directly reveals \(\mu^{*}\) and shows what fraction of the prior \(p_\mu\) falls below it (\Cref{fig:epsmu}).  The resulting curve is the most direct expression of the analytic reduction, a single sweep of \(\mu\) recovers the full failure boundary without Monte Carlo sampling, and \(\epsb\) evaluates the curve at the risk-adjusted friction \(v_\beta\), the mean of the adverse tail that the confidence level selects.
\end{remark}

\begin{figure}[t]
	\centering
	\includegraphics[width=.94\linewidth]{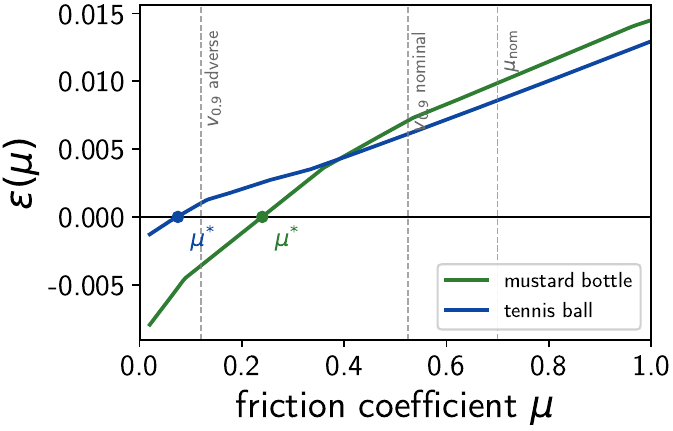}
	\caption{\textbf{Per-Grasp Quality Curve.} The Ferrari-Canny margin
		\(\eps(g,\mu)\) of the mustard-bottle and tennis-ball grasps of
		\Cref{tab:main_results}, swept over the friction coefficient. The curve is
		piecewise linear (\Cref{rem:epsmu_curve}) and comes from the stored grasp
		maps in the unit-normal-force convention of \(\epsb\), with no sampling.
		The zero crossing marks the critical friction \(\mu^{*}\) below which
		closure is lost. The mustard bottle's \(\mu^{*}\) sits above the
		adverse-prior risk-adjusted friction \(v_{0.9}\), so its adverse margin
		\(\eps^{(0.9)}_{\text{adv}}\) is negative, while the tennis ball's \(\mu^{*}\) sits
		below \(v_{0.9}\) and its certificate holds across the
		sweep.}\label{fig:epsmu}
\end{figure}
\begin{figure*}
    \centering
    \subfloat[FRoGGeR-LEAP (\(N{=}800\)).]{%
        \includegraphics[width=0.32\textwidth]{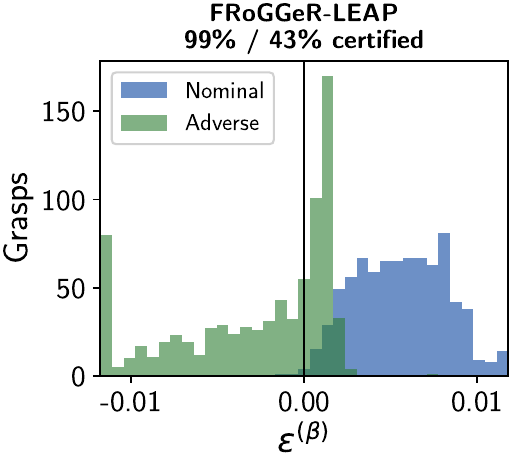}%
        \label{fig:cc_leap}}\hfill
    \subfloat[FRoGGeR-Allegro (\(N{=}799\)).]{%
        \includegraphics[width=0.32\textwidth]{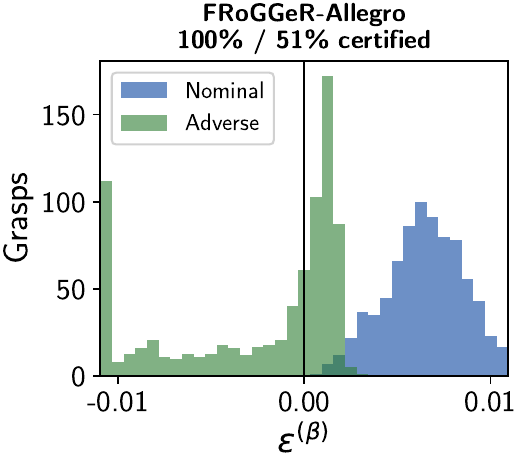}%
        \label{fig:cc_allegro}}\hfill
    \subfloat[DGN-Shadow (\(N{=}400\)).]{%
        \includegraphics[width=0.32\textwidth]{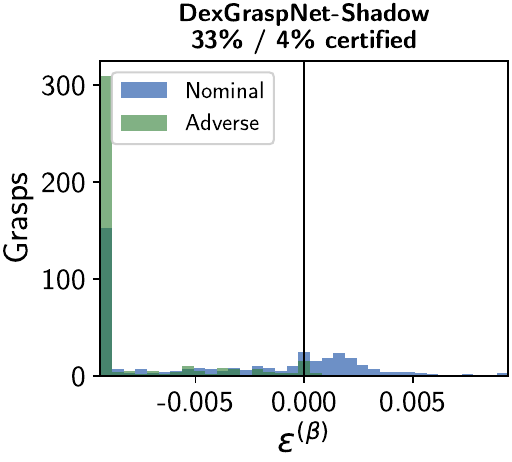}%
        \label{fig:cc_shadow}}
    \caption{\textbf{Cross-Dataset Distribution of \(\epsb\).}
        Per-grasp histograms of our risk metric \(\epsb\) at \(\beta{=}0.9\) under the
        nominal prior \(\mathcal{N}(0.7, 0.1^2)\) (blue) and the adverse prior
        \(0.5\,\mathcal{U}(0.7, 1.0) + 0.5\,\mathcal{U}(0.1, 0.3)\) (green), for each
        dataset. The nominal distribution sits above the closure boundary while the
        adverse distribution shifts across it. The title reports the certified-robust
        fraction (\(\epsb > 0\)) under the nominal and adverse priors, and the adverse
        mass at or below zero is the certification gap of
        \Cref{tab:crosscorpus}.}
    \label{fig:crosscorpus_hist}
\end{figure*}

\begin{figure*}
	\centering
	\newlength{\gallerywidth}%
	\setlength{\gallerywidth}{\dimexpr.16\textwidth-2\fboxrule\relax}%
	{\setlength{\fboxsep}{0pt}%
	\newcommand{\gcell}[1]{\fbox{\includegraphics[width=\gallerywidth, trim=0pt 150pt 0pt 0pt, clip]{#1}}}%
	\gcell{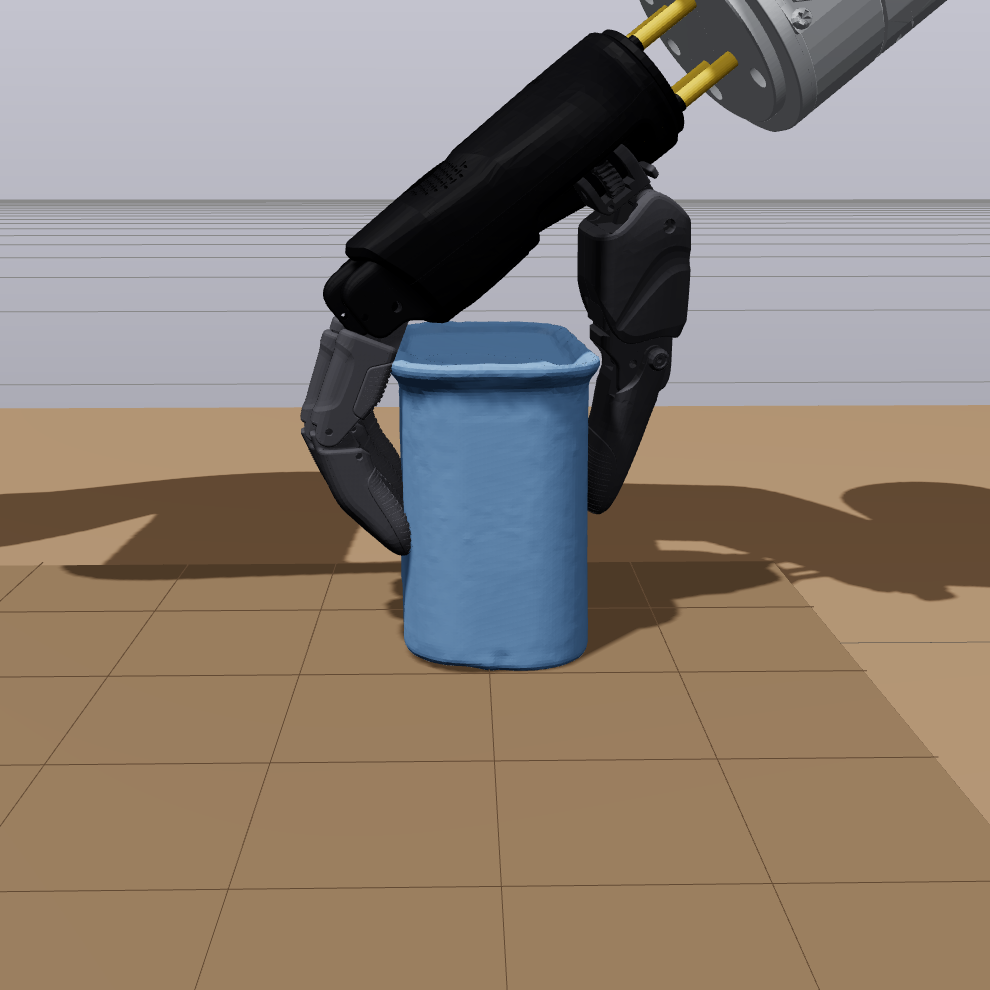}%
	\gcell{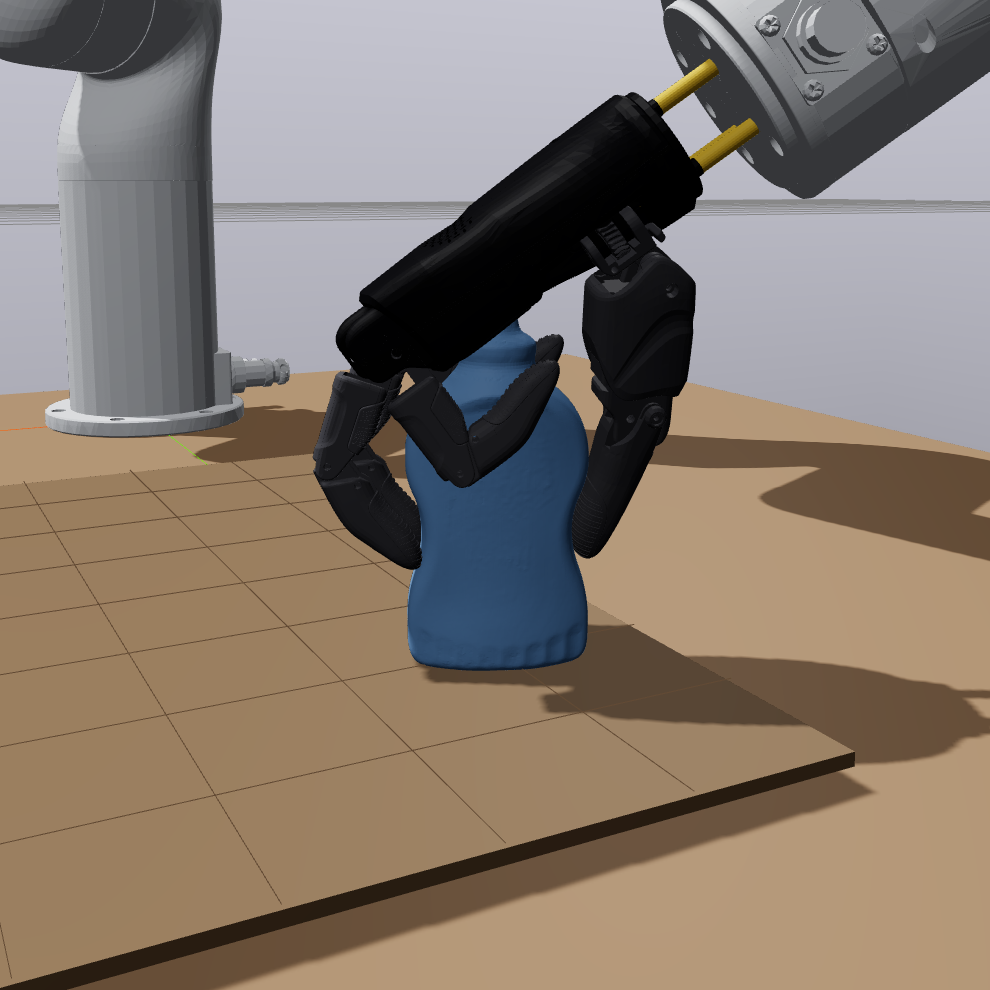}%
	\gcell{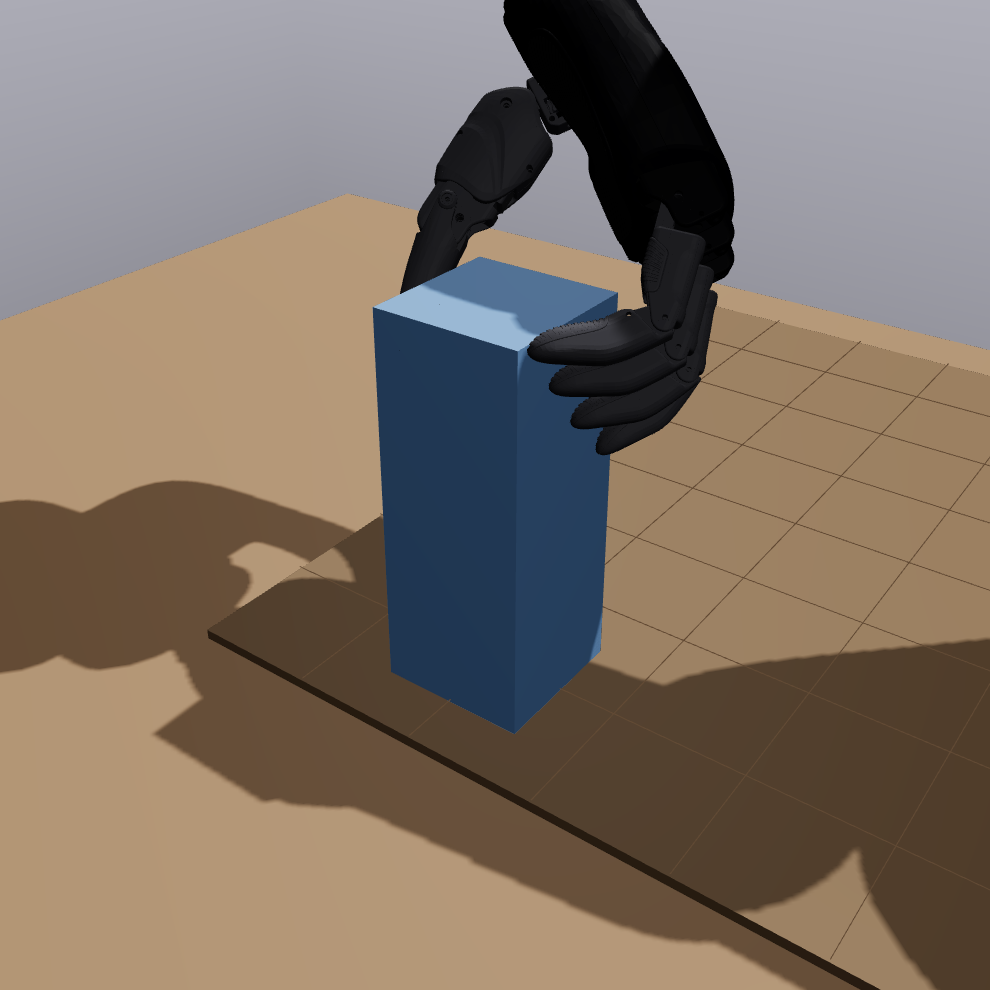}%
	\gcell{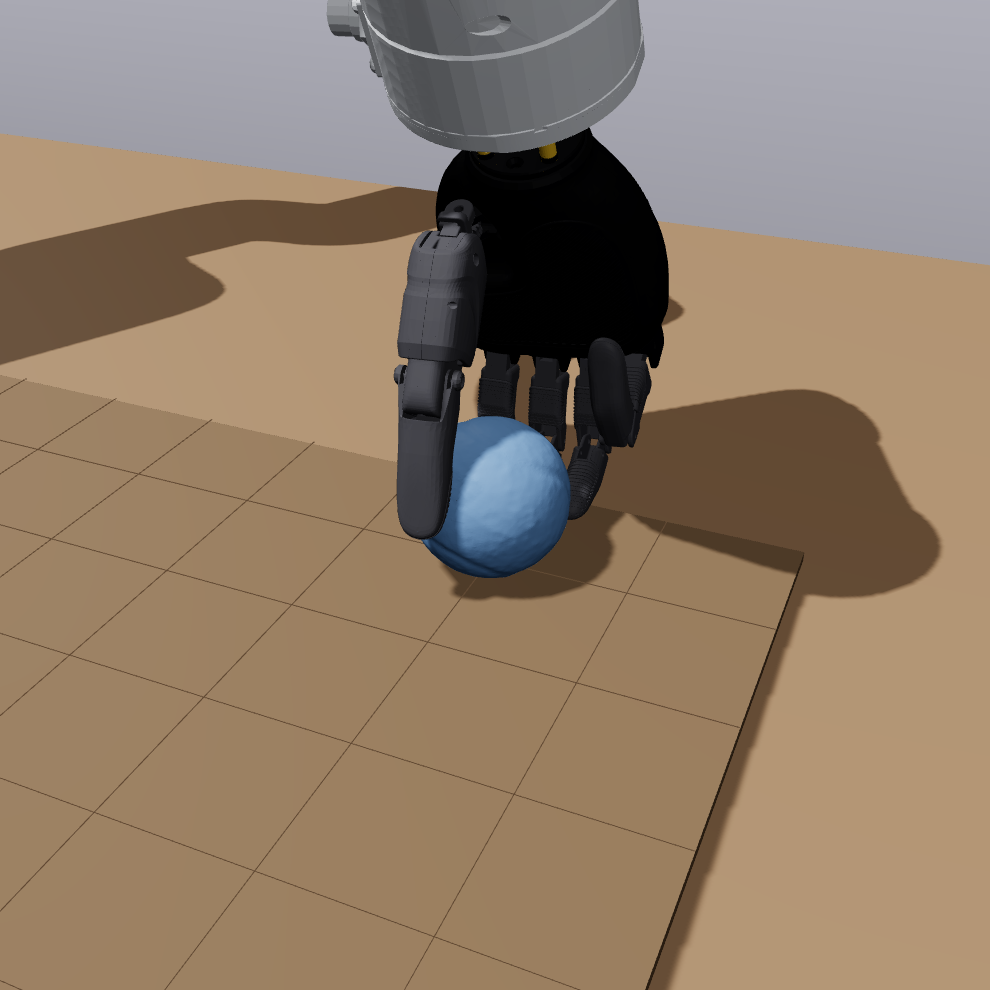}%
	\gcell{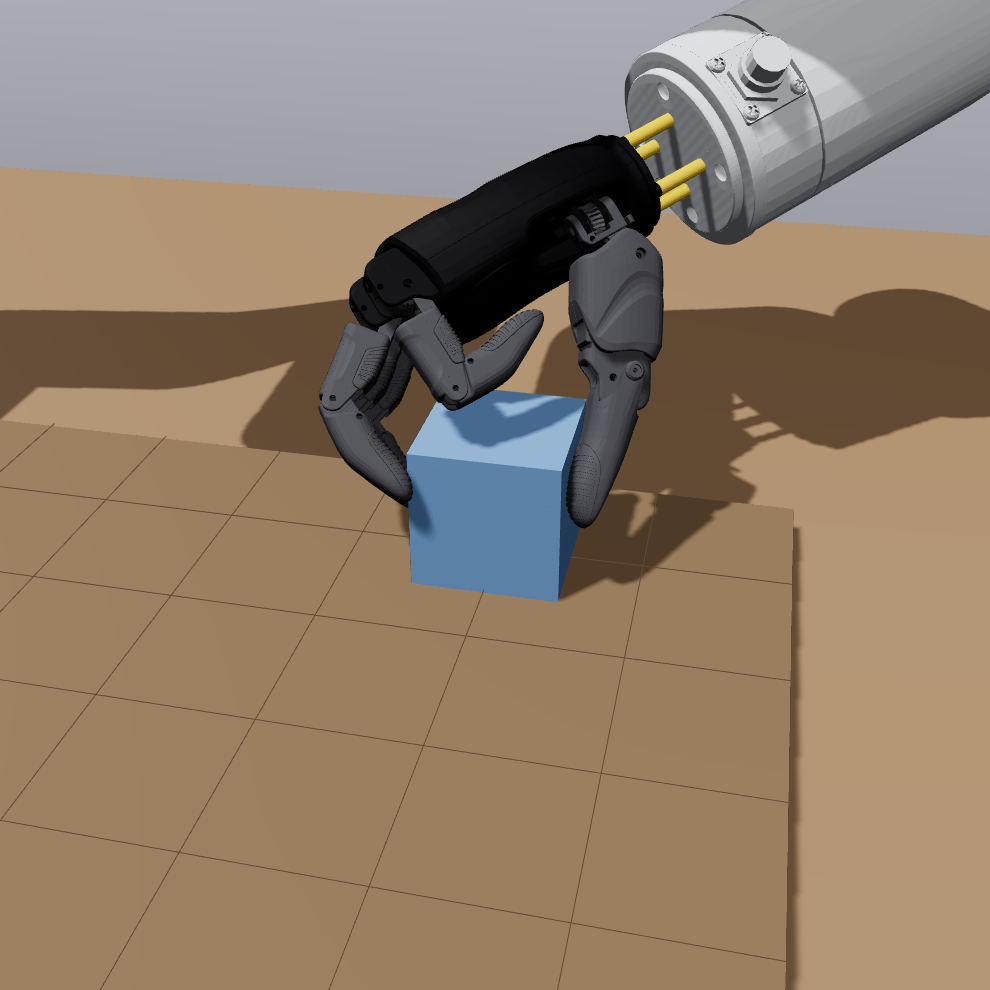}%
	\gcell{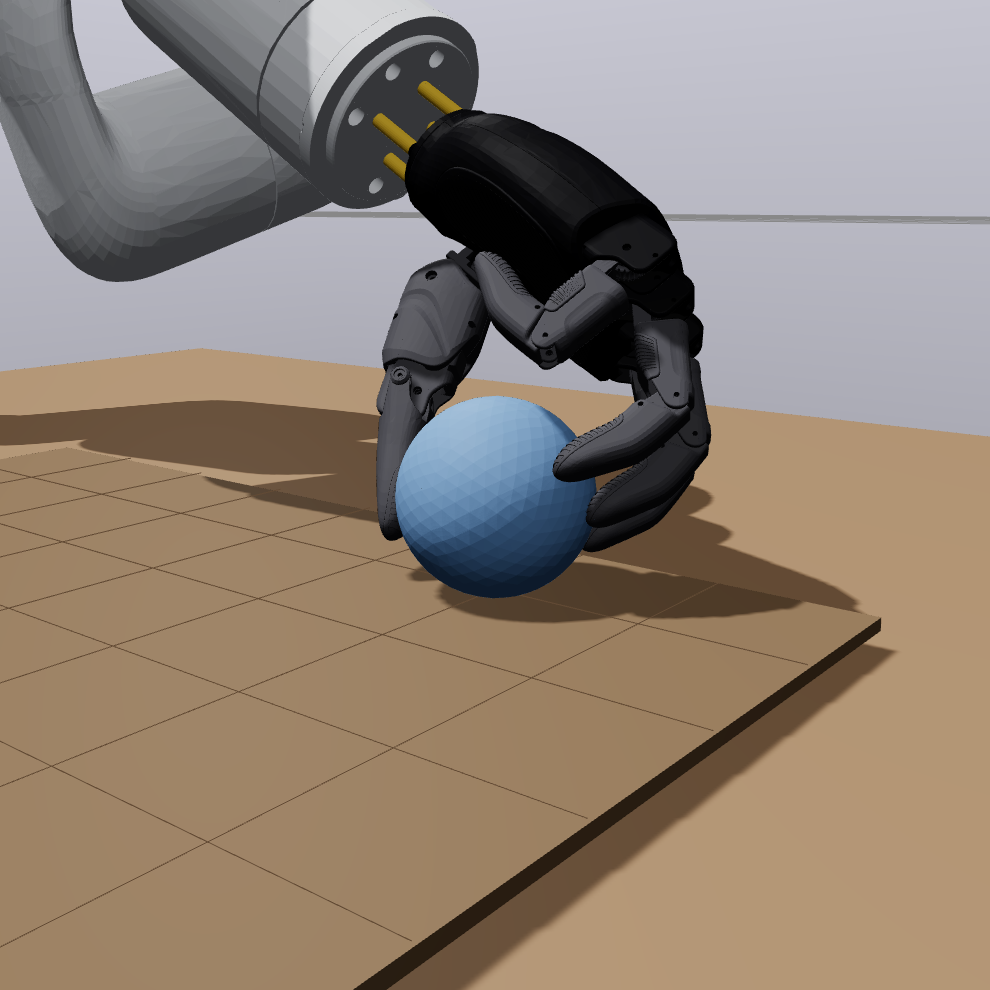}%
	\\[0pt]
	\gcell{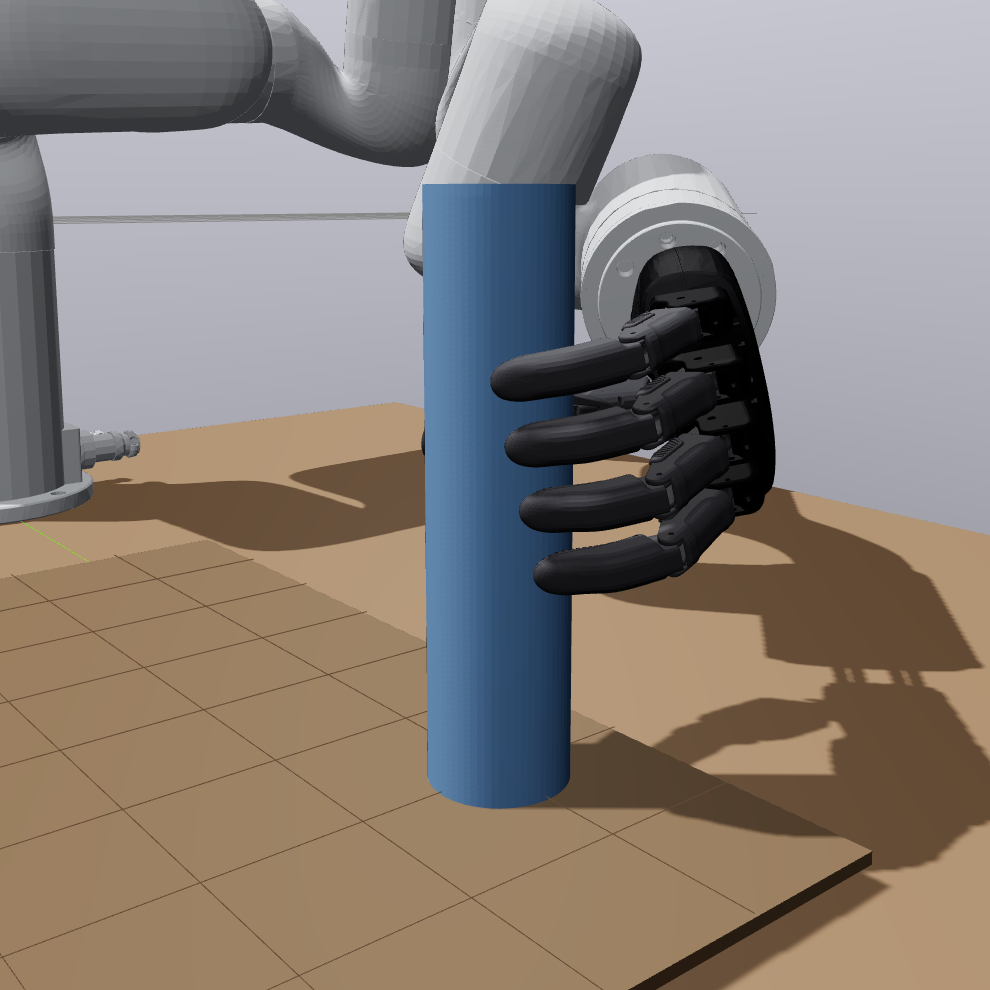}%
	\gcell{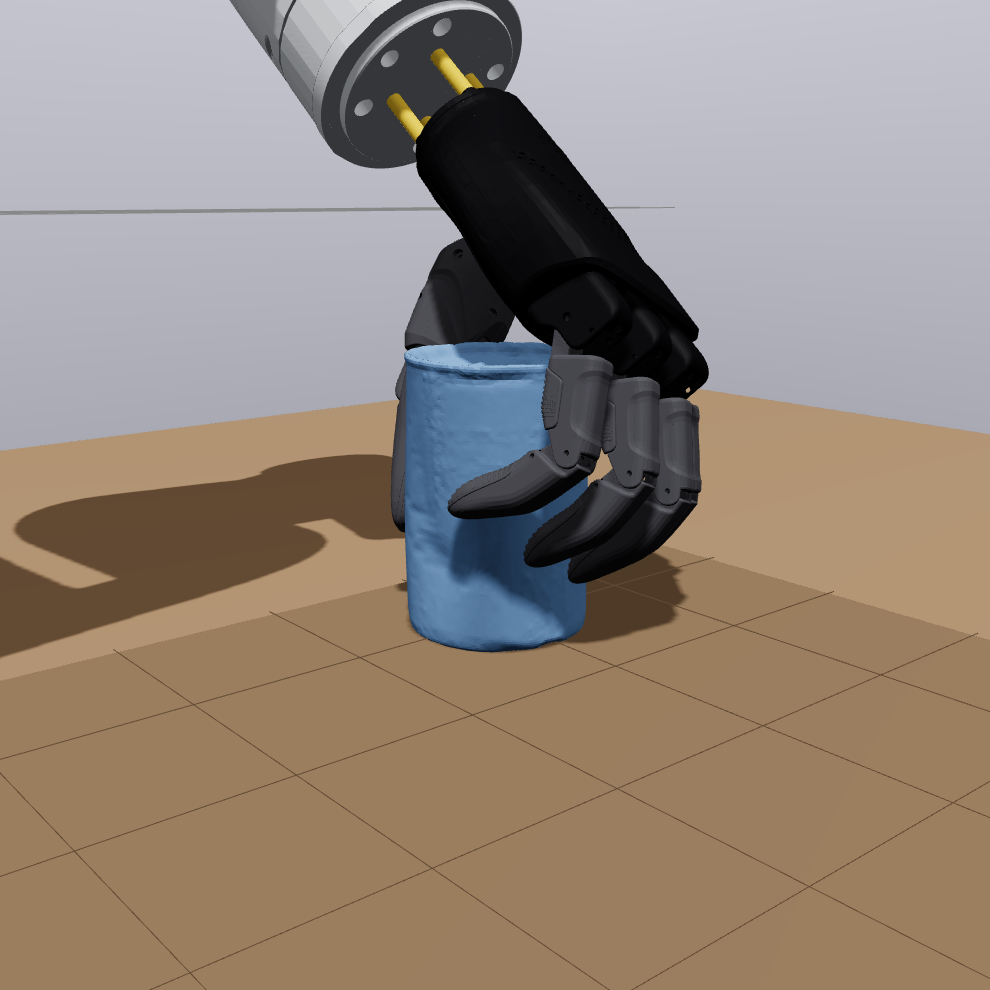}%
	\gcell{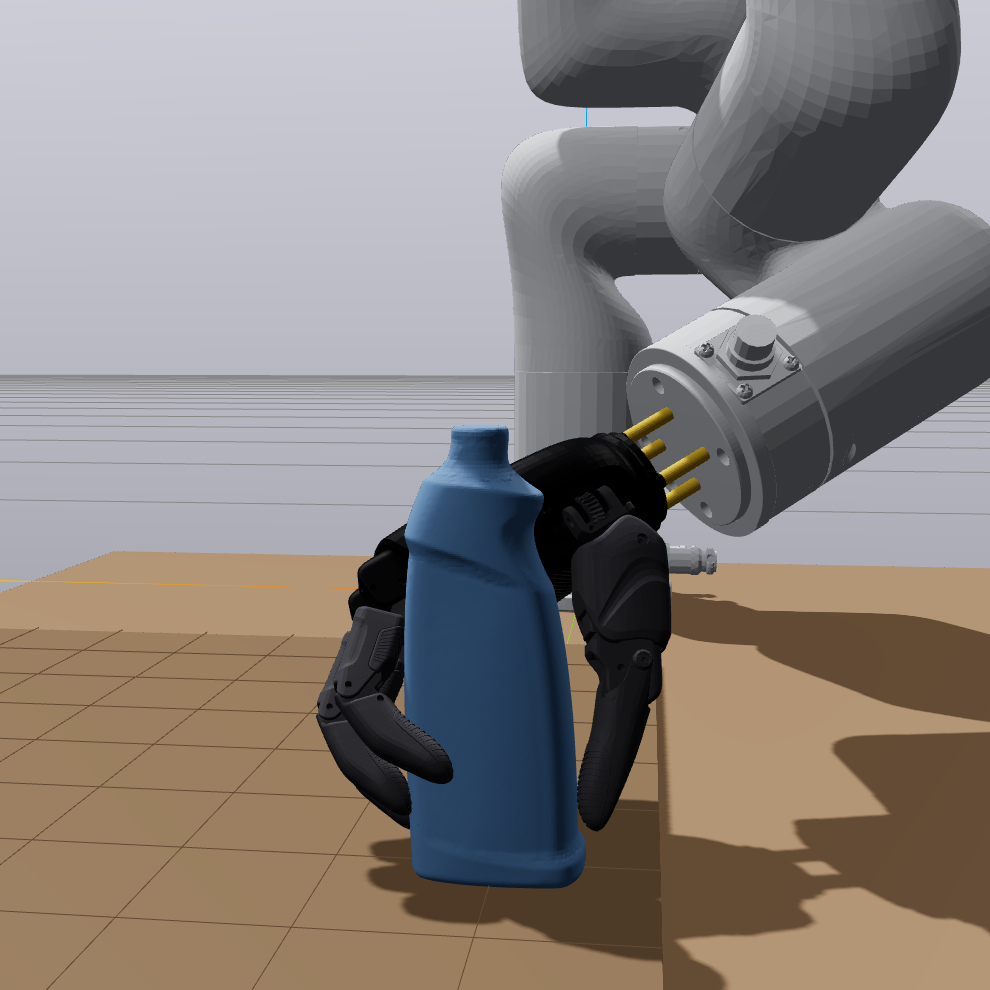}%
	\gcell{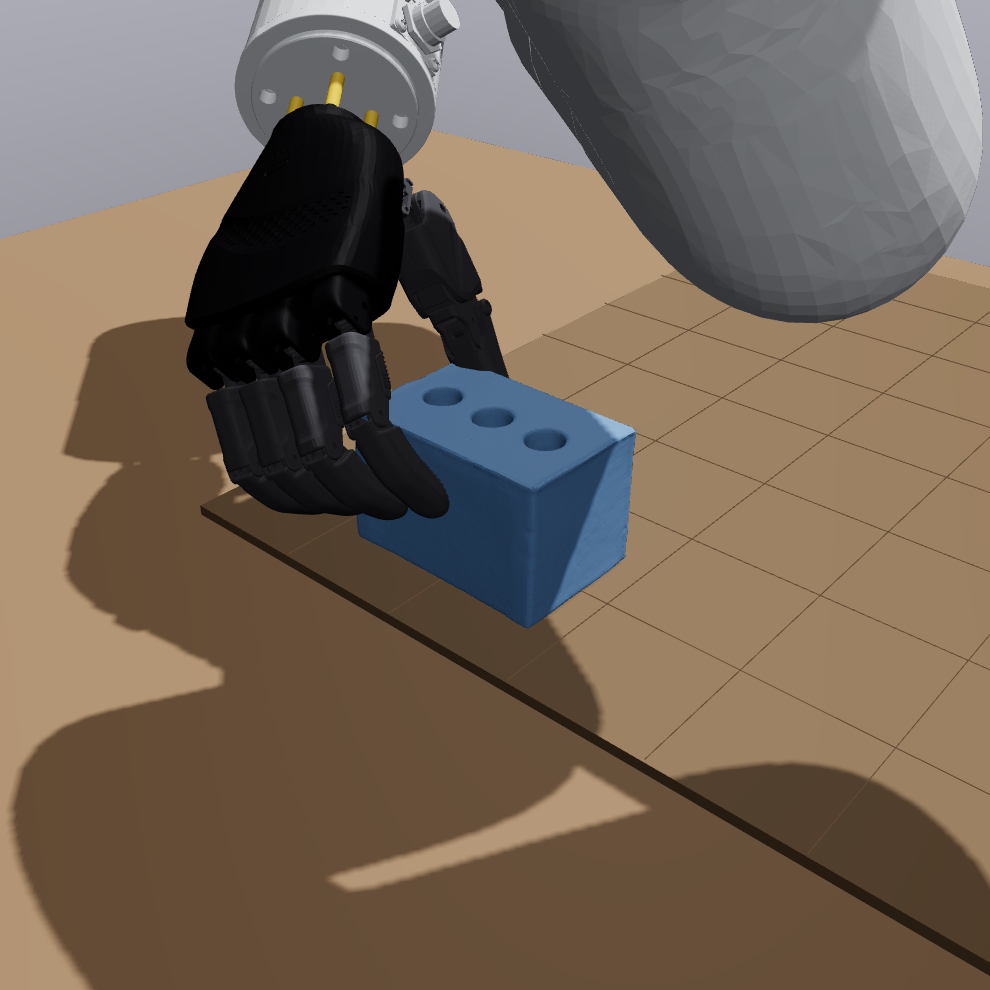}%
	\gcell{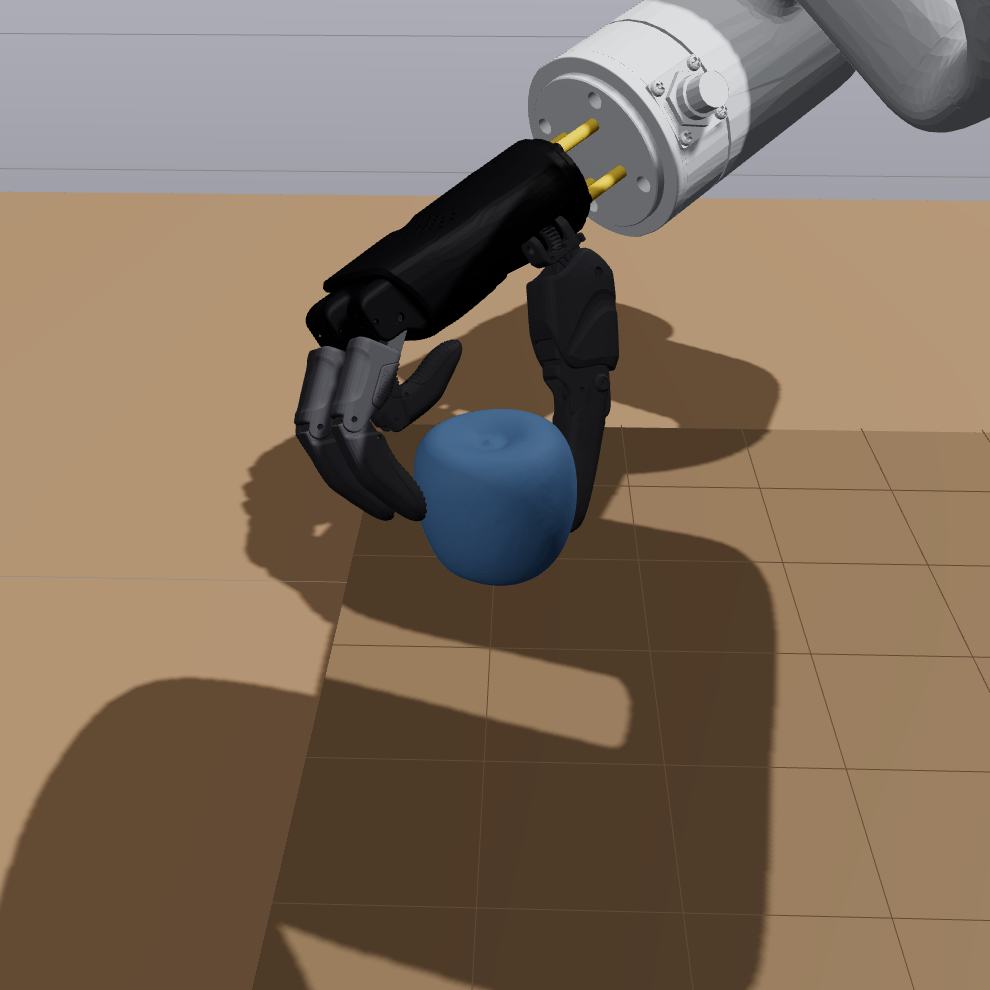}%
	\gcell{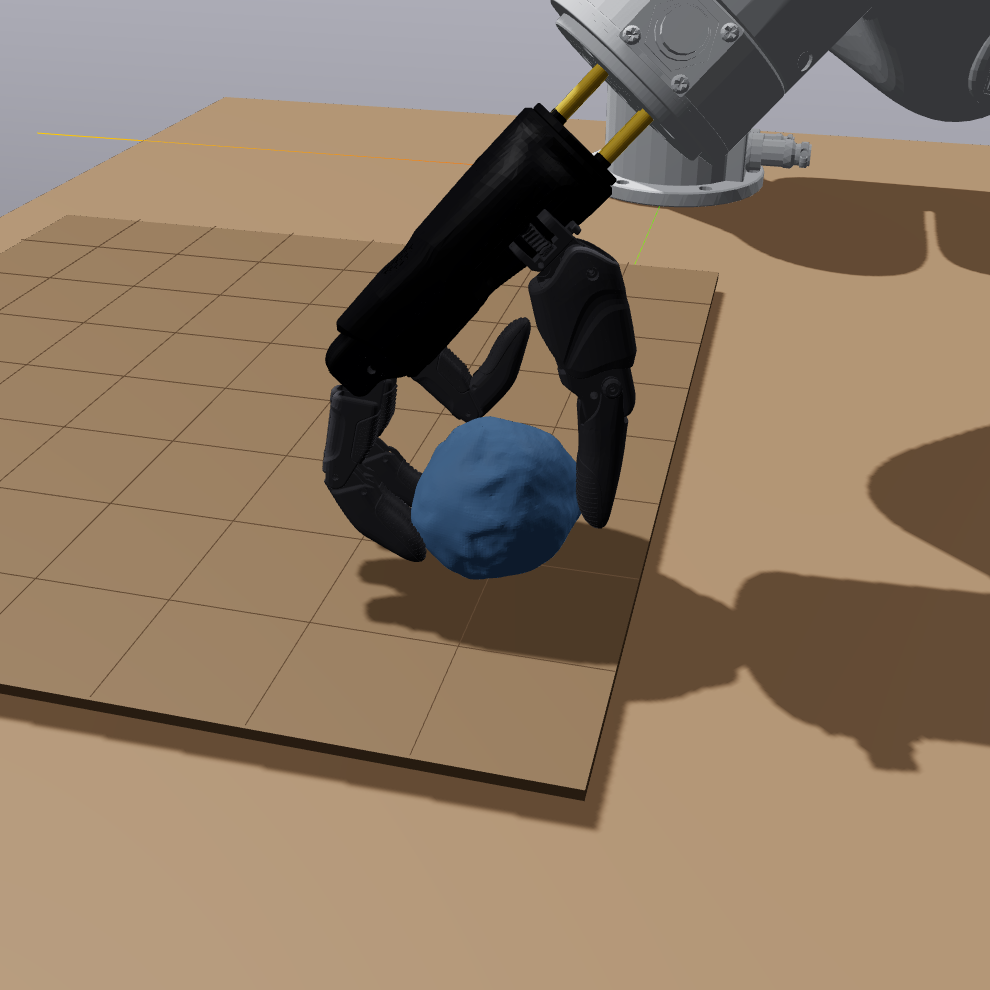}}%
	\caption{\textbf{Friction-Error Robust Grasps on the RealHand L6.} Twelve penetration-free, friction-uncertainty-aware grasps of objects spanning household items (meat and soup cans, mustard and bleach bottles, foam brick),
		primitive shapes (box, cube, cylinder, sphere), and rounded objects (tennis ball,
		apple, strawberry), and exercise enveloping, tripod, and pinch closures on the
		RealHand L6 (rendered in Drake). Every rendered grasp is force-closed (\(\ell^* > 0\)) and satisfies the friction-tail requirement (\(\eps^{(0.99)} > 0\)).}\label{fig:grasp_gallery}
\end{figure*}

\subsection{Cross-Dataset Validation on FRoGGeR and DexGraspNet}\label{ssec:crosscorpus}
To test whether the divergence of \Cref{ssec:divergence} generalizes beyond the LEAP
and Allegro datasets, we evaluate the same analytic signed \(\epsb\) across two public
grasp datasets and a third hand. We compute \(\epsb\) on 800 LEAP and 799 Allegro
grasps synthesized with a FRoGGeR-derived pipeline~\cite{li_frogger_2023}, and on 400
grasps from the DexGraspNet release~\cite{wang_dexgraspnet_2023} on the Shadow Hand,
sampled over 50 core-bottle objects with eight grasps each. The FRoGGeR datasets expose
contact points and inward normals directly. For DexGraspNet we obtain fingertip
positions by forward kinematics on the Shadow Hand model and project them onto the
scaled object mesh.

Specifically, every fraction uses the analytic signed \(\epsb\) at confidence \(\beta = 0.9\) under
the two priors of \Cref{ssec:friction}, with a grasp certified robust when
\(\epsb > 0\), the same certificate the rest of the paper uses. On both FRoGGeR
datasets our \(\eps_\text{nom}\) correlates strongly with FRoGGeR's own Ferrari-Canny
field, \(r = 0.96\) on LEAP and \(r = 0.94\) on Allegro, an independent cross-check of
the two pipelines. \Cref{tab:crosscorpus} reports the certified-robust fractions.
Under the nominal prior both FRoGGeR datasets certify at least 99\% of grasps and
the DexGraspNet subset 33.2\%. Under the adverse prior the LEAP fraction falls to
42.9\%, Allegro to 50.9\%, and DexGraspNet to 3.8\%, with the
per-hand distributions in \Cref{fig:cc_leap,fig:cc_allegro,fig:cc_shadow}. The pattern
matches the aggregate study, classical force closure is overconfident under realistic
friction, and the certified fraction shrinks as the prior shifts mass into the slippery
tail.

\begin{table}[htb]
    \centering
    \caption{Fraction of Grasps Certified Robust (\(\epsb > 0\) at \(\beta{=}0.9\)) Across Three Hand-Dataset Combinations and Two Friction Priors. Salmon-colored rows show the adverse-prior collapse of the certified fraction.}\label{tab:crosscorpus}
    \small
    \begin{tabular}{@{}l l r r r@{}}
        \toprule
        Dataset & Hand & \(N\) & Gaussian Prior & Adverse Prior \\
        \midrule
        FRoGGeR     & LEAP    & 800 & 99.4\%        & \cellcolor{losssalmon}42.9\% \\
        FRoGGeR     & Allegro & 799 & 100.0\%       & \cellcolor{losssalmon}50.9\% \\
        DexGraspNet & Shadow  & 400 & 33.2\%        & \cellcolor{losssalmon}~3.8\% \\
        \bottomrule
    \end{tabular}
\end{table}
In particular, the cross-dataset pattern reflects a property of our metric family
rather than a defect of any synthesizer. Both datasets were produced for purposes
other than risk-sensitive evaluation, and the authors that released them did so on
the basis of metrics (Ferrari-Canny \(\eps\), \(\ell^*\), simulator success
rate) that do not encode the friction prior used here. The presence of the
gap across two independent synthesizers indicates that an explicit-belief
margin like \(\epsb\) adds a real new axis to the family of force-closure
margins rather than restating an existing one.

\subsection{Predictive Validity Under Dynamic Perturbation}\label{ssec:predictive}
The analytic divergence of \Cref{ssec:divergence} predicts that a grasp the nominal
metric rates highly can fail once the contact friction departs from its assumed value.
We test this prediction in dynamic simulation with the six-axis shake protocol of GenDexGrasp~\cite{li2023gendexgrasp, zhong_gagrasp_2025}. In this test, the risk margin ranks realized robustness more accurately than both fixed-friction baselines.

\subsubsection{Shake-Based Simulation Stress Test}\label{sssec:gagraspsimtest}
We stress each certified grasp in the Drake physics-validation system. The FRoGGeR
collision proxy recesses behind the visual fingertip, so we weld a small contact pad
at every certified contact point and confirm that the grasp engages the object at the
synthesized configuration. We then apply a gravity-off six-axis shake at a fixed
contact friction and record how many of the six world-axis directions retain the
object. We repeat the shake over the adverse friction sweep
\(\mu \in \{0.2, 0.3, 0.4\}\) at the nominal friction \(\mu_\text{nom}=0.7\). The
object body takes on the mesh-and-density mass and moment of inertia, so the
disturbance force scales per object and the rotation criterion uses the true
inertia rather than a unit surrogate. Of the 641 certified grasps across the
seven objects, 302 retain the object in at least one direction at the nominal
friction, and we restrict the friction analysis to these, since a grasp that
never engages cannot test friction sensitivity.

\subsubsection{Realized Robustness Versus the Risk Margin}\label{sssec:realizedrobust}
\begin{figure}[htb]
	\centering
	\includegraphics[width=\linewidth]{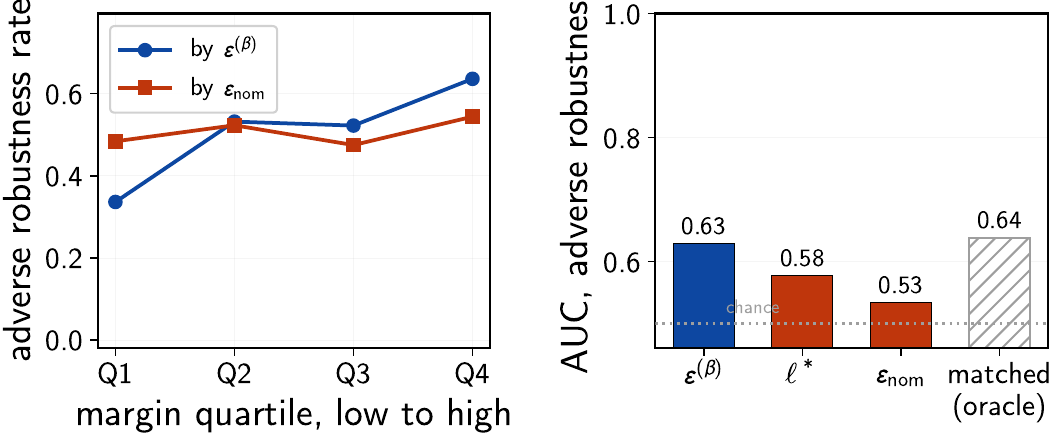}
	\caption{\textbf{Predictive Validity Under Dynamic Perturbation.} Realized
		robustness of 302 established-contact grasps under a gravity-off six-axis
		shake over a friction sweep \(\mu \in \{0.2, 0.3, 0.4\}\), pooled across the
		seven objects, at each object's mesh-and-density mass and moment of
		inertia. \emph{Left}: the mean fraction of shake directions that retain the
		object, binned by predictor quartile, rises with the risk margin \(\epsb\)
		but saturates under the nominal Ferrari-Canny margin \(\eps_\text{nom}\).
		\emph{Right}: the area under the receiver operating characteristic curve (AUC) for
		the adverse-friction outcome, where \(\epsb\), computed only from the friction
		prior and never from the realized friction, ranks realized robustness above
		\(\eps_\text{nom}\) and the FRoGGeR min-weight \(\ell^*\), neither of which
		models friction volatility. The hatched bar marks a matched-friction oracle
		with access to the realized friction, an upper bound, and the dotted line marks the
		chance level, an AUC of \(0.5\), where a predictor ranks robustness no better
		than a coin flip.}\label{fig:predictive_validity}
\end{figure}
\Cref{fig:predictive_validity} ranks realized robustness against three
predictors, none of which has access to the realized friction. The risk margin \(\epsb\) (\Cref{def:rsepsmet})
uses the adverse prior of \Cref{ssec:friction} through its CVaR friction
\(v_\beta \approx 0.12\), so it scores each grasp once without knowledge of the
operating friction. The nominal Ferrari-Canny margin
\(\eps_\text{nom}=\eps(g,\mu_\text{nom})\) evaluates the same functional at the assumed
friction and does not model the friction uncertainty, and the FRoGGeR min-weight
\(\ell^*\) is the third predictor. By the area under the receiver operating
characteristic curve (AUC) for the adverse-friction outcome, \(\epsb\) reaches
\(0.629\) against \(0.578\) for \(\ell^*\) and \(0.534\) for \(\eps_\text{nom}\), and it
ranks above the min-weight in \(97.9\%\) and above the nominal margin in \(100\%\) of
grasp-level paired bootstrap resamples, each gap with a 95\% interval clear of zero.
The per-grasp Spearman correlation against the graded directions-retained rate orders
the predictors the same way, \(0.25\) for \(\epsb\) against \(0.14\) for \(\ell^*\) and
\(0.05\) for \(\eps_\text{nom}\). Realized robustness climbs across the \(\epsb\)
quartiles, from \(0.34\) to \(0.64\), whereas under \(\eps_\text{nom}\) it stays near
\(0.50\) across the four bins, so the nominal margin separates neither the least nor
the most robust grasps well. A matched-friction oracle with access to the realized
friction reaches AUC \(0.639\), and \(\epsb\) recovers most of the corresponding separation without
the friction knowledge.

\subsubsection{Weight-Bearing Retention Under a Gravity-On Lift}\label{sssec:pickcompanion}
\begin{figure}[htb]
	\centering
	\includegraphics[width=\linewidth]{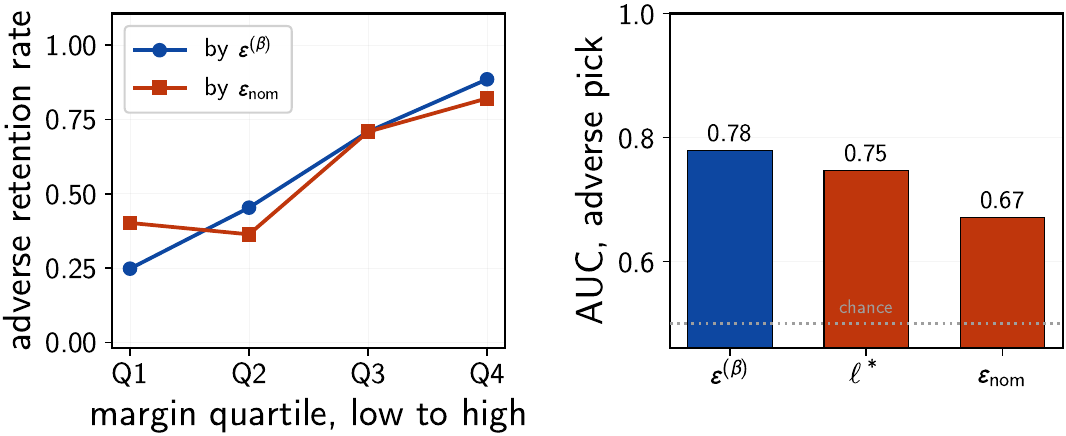}
	\caption{\textbf{Pick Companion Under a Gravity-On Lift.} Realized
		weight-bearing retention of 51 established-contact grasps under a
		gravity-on prismatic lift with a graded lateral pull at \(1\), \(3\),
		\(6\) times object weight, over the friction sweep
		\(\mu \in \{0.2, 0.3, 0.4\}\). \emph{Left}: the success rate binned by
		predictor quartile climbs with \(\epsb\) and saturates under the
		nominal Ferrari-Canny margin \(\eps_\text{nom}\). \emph{Right}: the pooled
		area under the receiver operating characteristic curve (AUC) for
		the binary adverse pick, where \(\epsb\) ranks
		above the min-weight \(\ell^*\) and the nominal margin
		\(\eps_\text{nom}\) without access to the realized friction, and the dotted
		line marks the chance level, an AUC of \(0.5\).}\label{fig:pick_companion}
\end{figure}
The shake measures wrench-space quality once contact is established. To probe
weight-bearing retention under the same friction sweep, we complement the shake with
a gravity-on prismatic lift on the same seven objects. The hand raises the object
along a vertical mount, and the pick then applies a lateral pull at graded
multiples of object weight while the grip holds under Coulomb friction. The pick
grades realized retention by the highest weight multiple the grasp holds before slip,
and we rank the resulting outcome against the same three predictors used for the shake. Of
the 105 lifts attempted, 51 establish contact under gravity across the seven objects,
and we restrict the retention analysis to these. \Cref{fig:pick_companion} ranks the
three predictors against the adverse pick. The risk margin \(\epsb\) reaches AUC
\(0.78\) against \(0.75\) for the min-weight \(\ell^*\) and \(0.67\) for the nominal
margin \(\eps_\text{nom}\). The success rate climbs from \(0.25\) to \(0.88\)
across the \(\epsb\) quartiles, while under \(\eps_\text{nom}\) the low quartile
already sits at \(0.40\) and \(\eps_\text{nom}\) fails to separate the least robust picks.
The lead of \(\epsb\) over \(\eps_\text{nom}\) is decisive, and the ranking against
\(\ell^*\) is directional only, not significant, at the seven-object cluster count. The two dynamic
axes rank the grasps consistently against the nominal margin, and the pick extends
the evidence into a weight-bearing regime the free-space shake does not model.
\Cref{tab:predictive} places the two axes side by side.

\begin{table}[htb]
	\centering
	\caption{Predictive Validity Across Two Dynamic Axes. Pooled ROC
		AUC and per-grasp Spearman \(\rho\) for each predictor against the
		adverse-friction outcome, on the gravity-off shake (\(n=302\)
		established-contact grasps) and the gravity-on pick (\(n=51\)
		established-contact grasps), over the friction sweep \(\mu \in \{0.2, 0.3, 0.4\}\). The shaded row is our risk margin, which
		leads both dynamic axes on both statistics all without access to the realized
		friction.}\label{tab:predictive}
	\small
	\setlength{\tabcolsep}{6pt}
	\begin{tabular}{@{}l c c c c@{}}
		\toprule
		 & \multicolumn{2}{c}{Shake, \(n=302\)} & \multicolumn{2}{c}{Pick, \(n=51\)} \\
		\cmidrule(lr){2-3} \cmidrule(l){4-5}
		Predictor & AUC \(\uparrow\) & \(\rho\) \(\uparrow\) & AUC \(\uparrow\) & \(\rho\) \(\uparrow\) \\
		\midrule
		\rowcolor{oursmint}
		\(\epsb\)                    & \textbf{0.63} & \textbf{0.25} & \textbf{0.78} & \textbf{0.55} \\
		\(\ell^*\)                   & 0.58 & 0.14 & 0.75 & 0.50 \\
		\(\eps_\text{nom}\)          & 0.53 & 0.05 & 0.67 & 0.33 \\
		\midrule
		Matched oracle               & 0.64 & n/a & n/a & n/a \\
		\bottomrule
	\end{tabular}
\end{table}

\ifpreprint
In addition, a larger re-run reproduces the ordering on both dynamic axes. The
gravity-off shake over 329 established-contact grasps scores \(\epsb\) at AUC
\(0.635\) against \(0.590\) for \(\ell^*\) and \(0.561\) for \(\eps_\text{nom}\),
and the gravity-on pick over 321 established-contact grasps scores \(0.744\)
against \(0.714\) and \(0.641\), so \(\epsb\) leads at the larger sample as well.
\fi

\subsection{Discussion}\label{ssec:discus}
The aggregate results expose a mismatch between nominal and distributional grasp
assessment. Across the 1{,}599 LEAP and Allegro grasps the nominal Ferrari-Canny
margin \(\eps_\text{nom}\) correlates with the adverse-prior \(\epsb\) at only 0.32,
down from 0.95 under the tight nominal prior (\Cref{ssec:divergence}), so the two
metrics agree when friction is well characterized but differ when friction is uncertain.
The divergence is geometric rather than a synthesizer artifact, since it tracks object
shape. Rounded objects such as the cylinder and tennis ball stay certified across the
adverse tail (\(\beta_c = 1.0\)), while irregular objects concentrate the friction
sensitivity, the mustard bottle losing closure at \(\beta_c = 0.46\)
(\Cref{tab:main_results}). A grasp that \(\eps_\text{nom}\) rates as adequate can therefore hold most of its wrench
capacity in the normal directions and collapse along the friction-dependent ones, a
failure mode the scalar nominal margin cannot express but the risk-adjusted margin
\(\epsb\) captures.

In addition, our risk objective extends the same distinction into synthesis. \Cref{tab:synthesis} shows
that FRoGGeR run under the differentiable risk objective lowers the friction-sensitive
fraction from 67.7\% to 57.6\% at matched budget, even though our objective targets the
mild Gaussian prior while the evaluation applies the harsher adverse prior
(\Cref{ssec:synth_objective}). The selected candidates favor contact configurations
whose margin degrades gracefully as friction drops, so \(\epsb\) serves as an
evaluation metric and a synthesis signal from one linear program.

\subsection{Limitations}\label{ssec:limits}
The risk margin \(\epsb\) orders realized dynamic robustness above both friction-unaware
baselines and reproduces the friction-sensitivity pattern across three hands and a
public dataset. The present evaluation has several limitations that future work can address. Specifically, the predictive-validity study of \Cref{ssec:predictive} spans seven objects, which limits an object-level grasp failure prediction analysis. Furthermore, the pooled AUC lead of \(\epsb\) stays stable at 0.629 against 0.578 and 0.534, and a grasp-level paired bootstrap places its 95\% interval clear of zero, yet a resampling that groups grasps by object has less power at seven object clusters, so we take the per-object generalization to be directional rather than absolute.

Because a static wrench-space margin only partly predicts a dynamic shake outcome, the absolute discrimination remains moderate at AUC 0.629. We therefore interpret this result as a comparison among metrics rather than as a calibrated closure probability. The shake test welds a contact pad at each certified contact and runs without gravity, so it measures wrench-space quality after the hand establishes contact. It does not measure whether the robot can achieve the grasp or support the object's weight. The pick study of \Cref{sssec:pickcompanion} adds a weight-bearing evaluation on the same seven objects. Because both studies contain only seven object clusters, we treat the consistency across the two evaluations as directional evidence rather than as proof of per-object generalization.

\section{Conclusion}\label{sec:conc}
We derived a family of risk-sensitive grasp quality metrics grounded in CVaR and
established three theoretical properties, monotonicity in the confidence level
(Theorem~\ref{thm:monotone}), differentiability with respect to grasp parameters
(Theorem~\ref{thm:diff}), and a probabilistic closure certificate
(Theorem~\ref{thm:certificate}). Across the aggregate, cross-dataset, dynamic shake,
and synthesis studies, $\epsb$ consistently identifies friction-sensitive grasps
that the nominal Ferrari-Canny margin overrates, reproduces the pattern across a
third hand and a public dataset, orders realized adverse robustness above both
baselines, and lowers the friction-sensitive fraction of synthesized grasps at
matched budget
(\Cref{ssec:divergence,ssec:crosscorpus,ssec:predictive,ssec:synth_objective}), so
These results support the use of $\epsb$ both as an evaluation metric and as a synthesis objective.

Beyond this, several directions for future work remain open. The differentiability result of
Theorem~\ref{thm:diff} provides a foundation for gradient-based synthesis via
backpropagation through a differentiable contact simulator. The present sample-based
approach does not exploit the resulting capability. Integration with learned contact models
could extend the uncertainty distribution beyond parametric friction variation.
Finally, hardware validation with real tactile feedback under natural contact
uncertainty would confirm whether the risk-sensitive certificates transfer from
simulation to physical grasping, a step we leave to future work.

  \bibliographystyle{ieeetr}
\bibliography{references}

\begin{figure*}
	\centering
	\newlength{\leapcellw}%
	\setlength{\leapcellw}{\dimexpr.1225\textwidth-2\fboxrule\relax}%
	{\setlength{\fboxsep}{0pt}%
	\newcommand{\lcell}[1]{\fbox{\includegraphics[width=\leapcellw]{figures/leap_zarm/#1}}}%
	\lcell{A0.png}%
	\lcell{A3.png}%
	\lcell{B0.png}%
	\lcell{B4.png}%
	\lcell{C4.png}%
	\lcell{D1.png}%
	\lcell{E1.png}%
	\lcell{F1.png}%
	\\[0pt]
	\lcell{F3.png}%
	\lcell{G1.png}%
	\lcell{G3.png}%
	\lcell{G6.png}%
	\lcell{D3.png}%
	\lcell{rubiks_cube.png}%
	\lcell{sugar_box.png}%
	\lcell{wood_block.png}%
	\\[0pt]
	\lcell{chips_can.png}%
	\lcell{master_chef_can.png}%
	\lcell{sns_cup.png}%
	\lcell{tomato_soup_can.png}%
	\lcell{baseball.png}%
	\lcell{lemon.png}%
	\lcell{pear.png}%
	\lcell{plum.png}}%
	\caption{\textbf{Friction-Uncertainty-Aware Grasp Synthesis on the LEAP Hand.} Twenty-four friction-uncertainty-aware
		grasps synthesized by the retargeted pipeline for the LEAP hand attached to a 6-DoF FAIR Innovation FR3 cobot,
		spanning EGAD! shapes (top row and left half of the middle row), household
		objects, and fruit-scale items (bottom row), rendered in Drake at the stored
		arm configuration. Every rendered grasp is force-closed (\(\ell^* > 0\)),
		satisfies the friction-tail requirement (\(\eps^{(0.99)} > 0\)), and reaches
		its grasp pose without table or pedestal
		collision. At synthesis time, we build a Drake \texttt{MultibodyPlant} with the object set on a \(150\) mm wide by \(40\) mm high cylindrical pedestal to provide additional support-surface clearance and improve arm reachability. We further confirm grasp-and-lift success for the LEAP hand in Drake dynamic simulation, reporting the one-shot lift outcomes in \Cref{apx:drkexec} and the per-object executability study in \Cref{tab:sim_exec}.}\label{fig:leap_gallery}
\end{figure*}

\appendix

\subsection{One-Shot Grasp-and-Lift Success in Dynamic Simulation}\label{apx:drkexec}
We confirm grasp-and-lift success in the Drake physics-validation system with
gravity on, the protocol of \Cref{sssec:pickcompanion}. Each certified grasp
mounts on a vertical prismatic joint at the synthesized palm pose, welds a contact
pad at every certified contact, and closes with a force-controlled feedforward
grip along the inward contact normals. Once the grip holds contact, the mount
raises the object \(12\,\text{cm}\), after which a lateral pull grows through
\(1\), \(3\), and \(6\) times the object weight while Coulomb friction alone
retains the grasp, over the adverse friction sweep \(\mu \in \{0.2, 0.3, 0.4\}\).
A run counts as a success when the object stays within a \(3\,\text{cm}\) slip and
a \(30^\circ\) rotation relative to the hand with at least two contacts through the
raise. Of the \(105\) lifts attempted across the seven objects, \(51\) establish
contact and enter the success grading, and the risk margin \(\epsb\) orders
their realized success rate above the nominal Ferrari-Canny margin
(\Cref{fig:pick_companion}). \Cref{tab:drkcert} isolates the same comparison. Among the
\(51\) established lifts, all of which the nominal margin certifies
(\(\eps_\text{nom} > 0\)), the \(27\) that the risk margin also certifies
(\(\epsb > 0\)) achieve a \(0.70\) success rate under the most adverse friction
\(\mu = 0.2\), against \(0.25\) for the \(24\) grasps the nominal margin certifies
but the risk margin rejects (\(\epsb \le 0\)), though the two sets hold identically
at \(\mu = 0.7\). \Cref{tab:drkexec} breaks the same lifts down by object.

\begin{table}[htb]
	\centering
	\caption{Lift Success Rate by Metric Certification in Drake. Success rate of the
		\(51\) established-contact lifts at each friction, split by the margin that
		certifies the grasp. All \(51\) are nominal-certified (\(\eps_\text{nom} > 0\)).
		The certified-robust subset (\(\epsb > 0\)) holds under adverse friction where the
		nominal-only grasps (\(\epsb \le 0\)) fail, though the nominal margin rates the
		two sets identically at \(\mu = 0.7\).}\label{tab:drkcert}
	\small
	\setlength{\tabcolsep}{5pt}
	\begin{tabular}{@{}l c c c c c@{}}
		\toprule
		 & & \multicolumn{4}{c}{Success Rate at Friction \(\mu\)} \\
		\cmidrule(l){3-6}
		Grasp Set & \(n\) & \(0.7\) & \(0.4\) & \(0.3\) & \(0.2\) \\
		\midrule
		
		Certified-robust \((\epsb > 0)\)  & \cellcolor{oursmint}27 & \cellcolor{oursmint}1.00 & \cellcolor{oursmint}0.89 & \cellcolor{oursmint}0.81 & \cellcolor{oursmint}0.70 \\
		Nominal only \((\epsb \le 0)\)  & \cellcolor{losssalmon}24 & \cellcolor{losssalmon}1.00 & \cellcolor{losssalmon}0.50 & \cellcolor{losssalmon}0.42 & \cellcolor{losssalmon}0.25 \\
		\bottomrule
	\end{tabular}
\end{table}

\begin{table}[htb]
	\centering
	\caption{Per-Object Grasp-and-Lift Success in Drake. For each object, the number
		of certified grasps evaluated, the number that establish contact and complete
		the gravity-on lift, and the success rate of those lifts under the graded
		lateral pull at each friction. Green marks success at or above \(0.80\) and
		salmon at or below \(0.40\), and the bold row totals the sweep.}\label{tab:drkexec}
	\small
	\setlength{\tabcolsep}{4.5pt}
	\begin{tabular}{@{}l c c c c c c@{}}
		\toprule
		 & & & \multicolumn{4}{c}{Success Rate at Friction \(\mu\)} \\
		\cmidrule(l){4-7}
		Object & Grasps & Lifted & \(0.7\) & \(0.4\) & \(0.3\) & \(0.2\) \\
		\midrule
		Cube            & 15  & 4  & \cellcolor{oursmint}1.00 & \cellcolor{oursmint}1.00 & \cellcolor{oursmint}1.00 & \cellcolor{oursmint}1.00 \\
		Box             & 15  & 7  & \cellcolor{oursmint}1.00 & 0.71 & 0.57 & 0.57 \\
		Cylinder        & 15  & 10 & \cellcolor{oursmint}1.00 & 0.60 & \cellcolor{losssalmon}0.40 & \cellcolor{losssalmon}0.20 \\
		Master Chef Can & 15  & 7  & \cellcolor{oursmint}1.00 & 0.43 & \cellcolor{losssalmon}0.29 & \cellcolor{losssalmon}0.29 \\
		Mustard Bottle  & 15  & 5  & \cellcolor{oursmint}1.00 & \cellcolor{losssalmon}0.40 & \cellcolor{losssalmon}0.40 & \cellcolor{losssalmon}0.00 \\
		Potted Meat Can & 15  & 4  & \cellcolor{oursmint}1.00 & \cellcolor{oursmint}1.00 & \cellcolor{oursmint}1.00 & 0.50 \\
		Tennis Ball     & 15  & 14 & \cellcolor{oursmint}1.00 & \cellcolor{oursmint}0.86 & \cellcolor{oursmint}0.86 & 0.79 \\
		\midrule
		\textbf{Total}  & \textbf{105} & \textbf{51} & \cellcolor{oursmint}\textbf{1.00} & \textbf{0.71} & \textbf{0.63} & \textbf{0.49} \\
		\bottomrule
	\end{tabular}
\end{table}

\subsection{Extending the Risk-Adjusted Synthesis Pipeline to Other Hand Morphologies}\label{app:leapext}
Our risk-adjusted metrics use only the stored grasp map and contact set, so
retargeting the synthesis pipeline of \Cref{sec:synthesis} to a new hand amounts to
swapping the hand model, its fingertip geometry, and the mounting arm, while the
metric layer stays unchanged. We demonstrate the resulting portability on a LEAP right hand
mounted on a 6-DoF FAIR Innovation FR3 robot arm. Our retargeted generator seeds fingertip-opposition candidates on the object surface, gates every candidate through an arm-reachability check (an inverse-kinematics solve on the arm-hand assembly subject to an arm
manipulability threshold and table and pedestal collision constraints), and
certifies the accepted grasps with the same force-closure and friction-tail
requirements as the RealHand L6 set of \Cref{fig:grasp_gallery}.

In particular, \Cref{fig:leap_gallery} shows twenty-four grasps selected from an 81-object set: thirteen EGAD!~\cite{morrisonEGADEvolvedGrasping2020b} shapes and eleven household and fruit-scale objects. We scale meshes that exceed the hand's working grasp aperture to a median extent of 55\,mm. Without the arm-reachability stage, the hand-retargeted synthesis procedure generates all 37{,}440 requested LEAP grasps for 234 objects and 160 seeds per object. We use this larger set in the extended evaluation noted in \Cref{ssec:limits}.

\subsection{Acknowledgments}
The cross-dataset benchmark in Section~\ref{ssec:crosscorpus} uses 1{,}599
grasps for the LEAP and Allegro hands that we synthesized with an
extension of the FRoGGeR library~\cite{li_frogger_2023}. We thank the
FRoGGeR authors for releasing the underlying nonlinear-programming
synthesis tooling as open-source software, on which our extension builds. The
400 Shadow-hand grasps come from the DexGraspNet
release~\cite{wang_dexgraspnet_2023}, and we thank the DexGraspNet authors for making their grasp
dataset publicly available.

\end{document}